\documentclass{article}

\usepackage[preprint]{neurips_2026}

\usepackage[utf8]{inputenc}
\usepackage[T1]{fontenc}
\usepackage{hyperref}
\usepackage{url}
\usepackage{booktabs}
\usepackage{amsfonts}
\usepackage{nicefrac}
\usepackage{microtype}
\usepackage{xcolor}

\usepackage{macros}
\usepackage{amsmath}
\usepackage{amsthm}
\usepackage{tikz}
\usepackage{graphicx}
\usepackage{subcaption}

\usepackage{cleveref}
\usepackage{placeins}

\usepackage{tabularx}
\usepackage{array}

\makeatletter
\@ifundefined{theorem}{\newtheorem{theorem}{Theorem}}{}
\@ifundefined{assumption}{\newtheorem{assumption}{Assumption}}{}
\makeatother

\title{Hierarchical Concept Geometry in Language Models Emerges from Word Co-occurrence}

\author{%
  Andres Nava \\
  Johns Hopkins University \\
  \texttt{anava1@jh.edu} \\
  \And
  Matthieu Wyart \\
  EPFL \& Johns Hopkins University \\
  \texttt{mwyart1@jh.edu} \\
}

\begin{document}

\maketitle

\begin{abstract}
We propose a distributional theory of how hypernymy---the ``is-a'' relation between general and specific concepts---is encoded geometrically in language representations. Starting from the empirically verified assumption that words closer on the WordNet hypernym graph co-occur more often, we characterize theoretically the spectrum of the resulting embedding Gram matrix of word2vec embeddings. Under mild positivity and decay conditions on the co-occurrence kernel, we prove that the leading eigenvectors first separate broad taxonomic branches and then progressively finer sub-branches, producing a \emph{hierarchical splitting geometry} with a coarse-to-fine spectral organization that mirrors the tree. We confirm these predictions in word2vec embeddings across many sampled WordNet subtrees, and show that the same signature extends strikingly well to Gemma 2B unembeddings. Our results indicate that hierarchical concept geometry in LLMs need not reflect a hierarchy-specific functional mechanism, but emerges from the spectral structure of pairwise word statistics.
\end{abstract}

\begin{figure}[!htbp]
  \centering
  \includegraphics[width=0.95\linewidth]{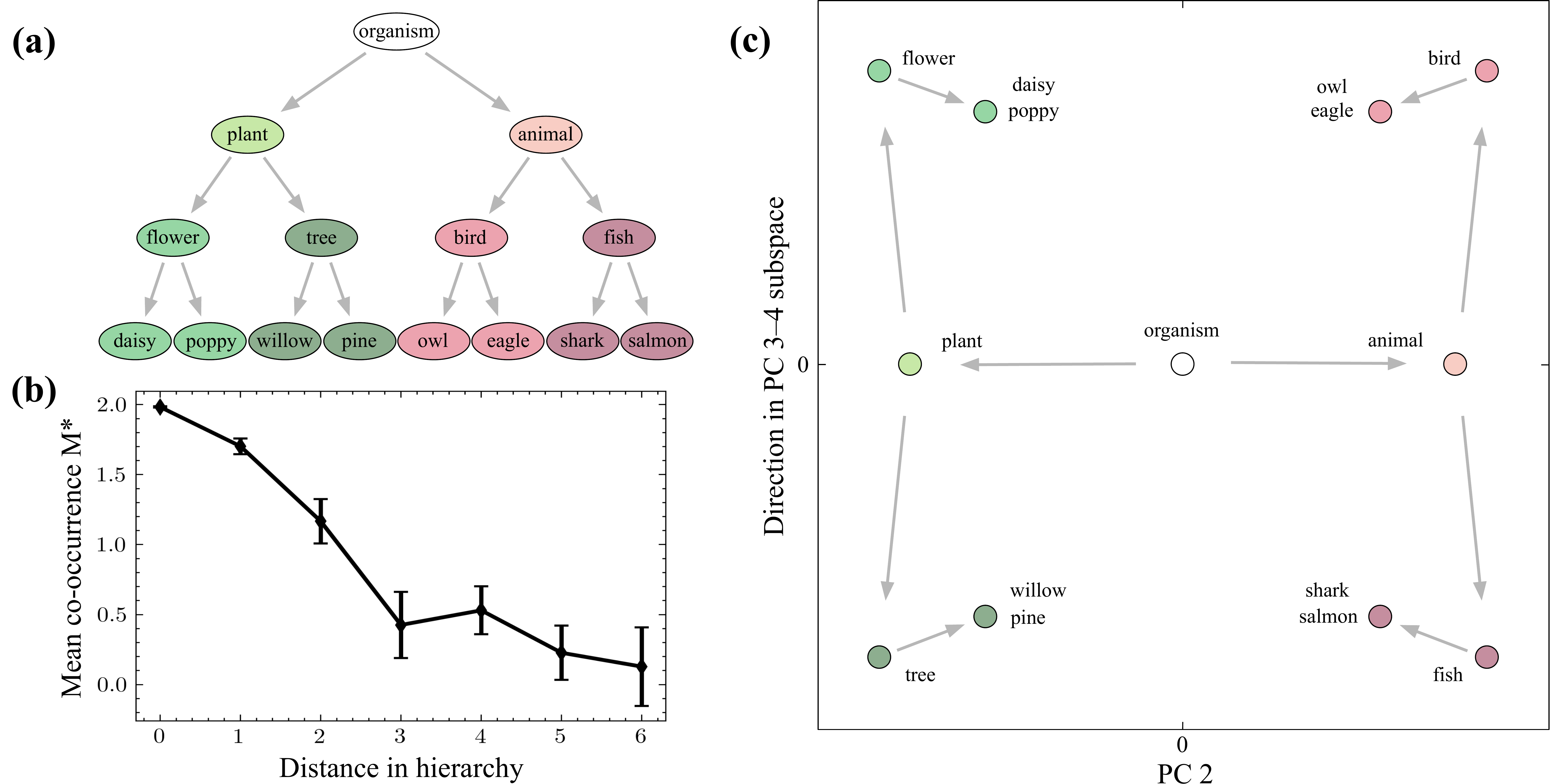}
  \caption{
      (a) WordNet hierarchy for a taxonomy of organisms.
      (b) Mean co-occurrence statistic $M^\star$ as a function of
      hierarchy distance in the organism taxonomy. Error bars indicate one
      standard error; nearby concepts co-occur more often.
          (c) Projections of taxonomy nodes onto principal components (PCs)
      predicted by the theory. PC 2 splits the plant and animal subtrees,
      while the degenerate PC 3--4 subspace contains directions that split
      flower vs.\ tree and bird vs.\ fish. We show one selected direction
      in this subspace.
    }
  \label{fig:figure1}
\end{figure}

\FloatBarrier

\section{Introduction}

While the success of Large Language Models (LLMs) in learning languages and reasoning from examples is mesmerizing, understanding the reasons for this success remains a challenge. A common belief is that LLMs build a representation of language that allows for simple manipulation. Recent work has shown that semantic variables often appear geometrically in representation spaces, forming directions, subspaces, loops, prisms, and other low-dimensional structures \cite{engels2025not, merullo2024language, modell2025origins, gurnee2024language, gurnee2025when}, some of which were recently found in the brain \cite{zhu2026geometric}. Such organization indeed allows one to manipulate concepts with simple arithmetic operations, such as vector addition or rotation. In this paper, we ask what geometric structure is induced by hypernymy, the ``is-a'' relation between more specific and more general concepts. It is one of the most basic organizing principles of meaning: an \emph{owl} is a \emph{bird}, a \emph{bird} is an \emph{animal}, and an \emph{animal} is an \emph{organism}. We take the WordNet hypernym graph~\citep{miller1995wordnet} as our operational definition of semantic hierarchy, with hypernyms corresponding to more general concepts and hyponyms to more specific concepts. This relation is exemplified on a taxonomy of organisms in \cref{fig:figure1}(a).

Specific algorithms have been introduced to bias representations to be hierarchical \cite{nickel_poincare_2017,nguyen2026hierarchicalconceptembedding}, however, LLMs already spontaneously form hierarchical representations in some form. Indeed,
linear transformations can predict ancestor or hypernym representations from descendant or hyponym representations in both skip-gram embeddings
\cite{fu_2014} and LLM hidden activations
\cite{sakata2026linearrepresentationshierarchicalconcepts},
while  linear transformations of LLM activations can recover WordNet hypernym-graph distances
\cite{orhan2026emergencephonemicsyntacticsemantic}. Park et al.~\cite{park2025geometrycategoricalhierarchicalconcepts} give a functional interpretation of hierarchical geometry in LLMs. Their starting point is the linear representation hypothesis: the idea that high-level semantic variables are encoded by linear directions or concept vectors in the model's representation space. In their framework, a concept vector is meaningful because of the operations it supports: moving in that direction should increase the model's probability of expressing the target concept, while leaving unrelated semantic variables unchanged. Thus, steering between \emph{bird} and \emph{fish} should change which kind of animal is represented, but should not change the probability of being an animal at all. For semantic hierarchies, Park et al.'s postulate implies a particularly elegant geometry. A descendant concept such as \emph{bird} should preserve the broader \emph{animal} component while adding an orthogonal refinement that distinguishes birds from other animals. In such a geometry, inner products with concept vectors act as clean membership readouts: an animal direction detects animalhood, a bird direction detects birdhood, and progressively finer directions detect progressively more specific classes.

Our contribution is to give a simple statistical mechanism for why this geometry approximately appears. Rather than postulating hierarchical orthogonality from functional desiderata, we show that hierarchical geometry arises naturally from co-occurrence statistics, i.e., how often words appear near one another in text. This account is also more predictive: it implies not only that the same geometry should appear outside LLMs, in simple word embeddings such as word2vec, but also that the geometry should have a specific coarse-to-fine spectral organization.

To see this, we assume and confirm that words closer on the WordNet graph tend to co-occur more often, as illustrated in \cref{fig:figure1}(b): for example, \textit{tree} and \textit{plant} co-occur more often than \textit{tree} and \textit{organism}. Because word2vec embeddings are determined by co-occurrence statistics, this assumption leads to detailed predictions for their geometry. Extracting symmetric subtrees of WordNet, we predict that the corresponding word2vec embeddings exhibit what we call \emph{hierarchical splitting geometry}: successive principal components separate subtrees from coarse to fine levels of the taxonomy. These PCs, which correspond to successive eigendirections of the Gram matrix, first separate broad taxonomic branches and then progressively finer subbranches. In the organism tree of \cref{fig:figure1}, the second PC separates animals from plants by taking opposite signs on the two subtrees; the third and fourth PCs, which are degenerate, split birds from fish and flowers from trees. These hierarchical splits are visualized in \cref{fig:figure1}(c): projecting each node's vector onto the relevant PCs separates nodes according to their hierarchical relationships. The remaining modes encode progressively finer splits, such as separating \textit{daisy} from \textit{poppy}.

We first confirm these predictions in word2vec embeddings. We then show that the same hierarchical splitting geometry extends strikingly well to LLM representations. Overall, our results refine the functional picture of Park et al.: hierarchical geometry is real and robust, but it need not arise from exact orthogonal concept directions. Instead, the approximate orthogonality observed in practice can be understood as a consequence of the spectral structure induced by pairwise word statistics. This yields a more mechanistic and quantitatively predictive account of hierarchical geometry in representation spaces.

\section{Related work}

\textbf{Hierarchical representation.} Hierarchical organization of embeddings was investigated theoretically in a supervised setting \cite{saxe2019_semantic,saxe2013_hierarchical}. In this view, branching points in the taxonomy can lead to different features; interesting representations can emerge in machines seeking to predict those. These works do not consider the structure of the data distribution, however, and as such do not apply to self-supervised methods central to word embeddings.

\textbf{Co-occurrence and embedding geometry.} It is well known that word2vec approximately performs a spectral decomposition of the Pointwise Mutual Information (PMI) matrix characterizing word co-occurrence as in \cref{eq:mstar} \cite{levy2014neural}. Two recent works make assumptions on the structure of co-occurrence to predict the embedding geometry of certain relations. Korchinski et al.~\cite{korchinski2025emergence} argue that if words are characterized by discrete attributes (such as gender) that control co-occurrence, a parallelogram structure of the kind \(king-man+woman=queen\) must occur. Instead, if words are characterized by continuous attributes (such as the seasonality of months of the year or the latitude and longitude of cities), embedding geometry is characterized by smooth manifolds such as circles or maps \cite{karkada2026symmetry}. This work extends these approaches to hierarchical semantic notions, in particular hypernymy.

\section{Theory}

\subsection{From co-occurrence statistics to hierarchical embedding geometry}
\label{sec:theory_assumptions}

Word embedding models such as word2vec and GloVe learn from co-occurrence statistics \cite{mikolov2013distributed,pennington2014glove}. Recent work has shown that their learned representations can be expressed in terms of the top eigenvectors of a normalized co-occurrence matrix \(M^\star\) \cite{karkada2025closedform}. This matrix is defined by
\begin{equation}
M^\star_{ij}
:=
\frac{P_{ij}-P_iP_j}{\frac12(P_{ij}+P_iP_j)}
\approx
\log\!\left(\frac{P_{ij}}{P_iP_j}\right),
\label{eq:mstar}
\end{equation}
where the approximation connects \(M^\star\) to PMI when \(P_{ij}/(P_iP_j)\) is close to one.

Writing the eigendecomposition \(M^\star = \Phi \Lambda^\star \Phi^\top\),
let \(\lambda^\star_1,\dots,\lambda^\star_n\) denote the eigenvalues in decreasing order, and let \(\phi_1,\dots,\phi_n\) be the corresponding eigenvectors. For an embedding of dimension \(d\), the word2vec representation is constructed from the top \(d\) eigenvectors of \(M^\star\). Concretely, if \(\Phi^{(d)}\in\mathbb R^{|\mathcal S|\times d}\) contains these eigenvectors and \(\Lambda^{\star(d)}\) is the corresponding diagonal matrix of eigenvalues, then the embedding coordinates are
\begin{equation}
\label{eq:w2v_form}
W^{(d)}_{i\mu}
=
\Phi^{(d)}_{i\mu}
\sqrt{\Lambda^{\star(d)}_{\mu\mu}},
\qquad
\mu=1,\dots,d.
\end{equation}
Thus, the embedding dimension \(d\) determines how many positive spectral modes of \(M^\star\) are retained: increasing \(d\) incorporates progressively finer eigenmodes of the co-occurrence structure. The retained eigenvectors define the principal directions of the embedding point cloud, while the corresponding eigenvalues determine the variance along those directions.

Our starting assumption is that words that are closer in the WordNet hypernym graph tend to co-occur in text more often. We take the WordNet hypernym graph as our operational notion of semantic hierarchy and ask what geometric structure this induces in word2vec embeddings.

\begin{assumption}[Hierarchical co-occurrence model]
\label{ass:coocc}
Let \(\mathcal S\) denote the selected vocabulary. Each word is identified with a node in the WordNet hypernym graph, and for \(i,j \in \mathcal S\), let \(\mathrm{dist}(i,j)\) denote their distance. We assume that
\begin{equation}
P_{ij} = P_i P_j \, \left[ \widetilde C(\mathrm{dist}(i,j)) + \varepsilon_{ij} \right],
\end{equation}
where \(P_{ij}\) is the co-occurrence probability within a fixed context window, \(P_i\) is the unigram probability, \(\widetilde C\) is a function of distance, and \(\varepsilon_{ij}\) is a fluctuation term.
\end{assumption}

This assumption formalizes the idea that, after normalizing for unigram frequencies, the mean co-occurrence signal is controlled primarily by distance in the semantic hierarchy. In the analysis below, we neglect the fluctuation term and work with the mean co-occurrence structure, which isolates the effect of distance and yields a tractable model. The effect of noise is discussed later.

Under Assumption~\ref{ass:coocc}, there exists a function \(f\) such that
\begin{equation}
M^\star_{ij} = f(\mathrm{dist}(i,j)).
\label{eq:mstar_tree}
\end{equation}
After neglecting \(\varepsilon_{ij}\), the ratio \(P_{ij}/P_iP_j\) depends only on distance, and therefore \(M^\star_{ij}\), being a deterministic transform of that ratio, also depends only on distance. The kernel \(f\) thus describes how distance on the tree affects co-occurrence.

Empirically, we observe that the mean co-occurrence value is positive at small distances and decays rapidly towards zero within a standard error in \cref{fig:global_decay}. This motivates the following assumption on the function \(f\).

\begin{assumption}[Positive decaying kernel]
\label{ass:decay}
The function \(f\) is positive and strictly decreasing as a function of distance.
\end{assumption}

\begin{figure}[!htbp]
    \centering
    \includegraphics[width=0.45\textwidth]{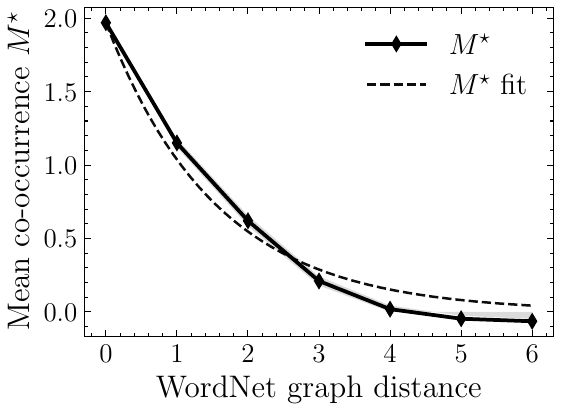}
    \hspace{0.02\textwidth}
    \includegraphics[width=0.45\textwidth]{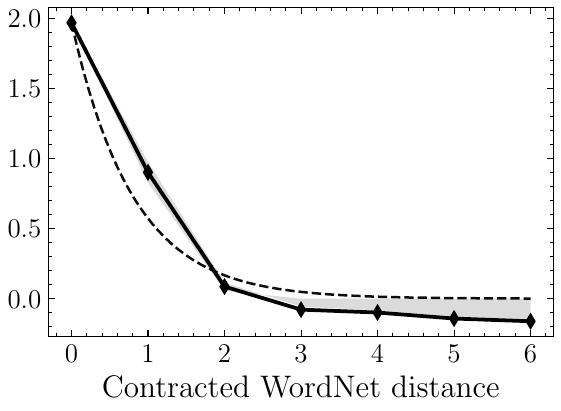}
    \caption{
    \textbf{Mean co-occurrence statistic \(M^\star_{ij}\) decays with semantic distance under two taxonomy constructions.}
    Both the original WordNet-distance measure (left) and the contracted arborescence used in our experiments (right) show monotone, approximately exponential decay. Dashed curves show fitted exponential kernels \(f(d)=\alpha e^{-\beta d}\); shaded bands denote one standard error.
    }
    \label{fig:global_decay}
\end{figure}

We first consider the idealized setting in which \(d \ge \rank(M^\star)\) and \(M^\star\) is positive semidefinite (PSD), so that all nonzero eigenmodes of \(M^\star\) are retained, and the Gram matrix of the embeddings equals \(M^\star\). In practice, however, one typically works with \(d \le \rank(M^\star)\), so the embedding is a rank-\(d\) truncation built from the top \(d\) eigenvectors. \cref{app:m_star_generalize} shows that the same qualitative predictions extend to non-idealized settings where \(M^\star\) is indefinite; empirically, truncating the negative eigenvalues to zero preserves the decay law across the remaining positive eigenspectrum.

Our theory treats the word embeddings associated with any regular subtree of WordNet. We focus on perfect binary subtrees, as many of them can be constructed, allowing us to quantitatively test our theory. Theoretical extensions to more general regular trees are deferred to \cref{app:generalize_trees}.

\subsection{Hierarchy-adapted spectral decomposition}
\label{sec:spectral}

\textbf{Observation:} In the idealized binary-tree setting under the assumptions of \cref{sec:theory_assumptions}, the leading eigenvectors of \(M^\star\) organize hierarchically: the first mode varies only with depth, while subsequent modes split subtrees from coarse to fine levels, as illustrated in the top row of \cref{fig:organism_eigvec}. We now explain why this structure arises.

\begin{figure}[!htbp]
    \centering
    \includegraphics[width=1.\linewidth]{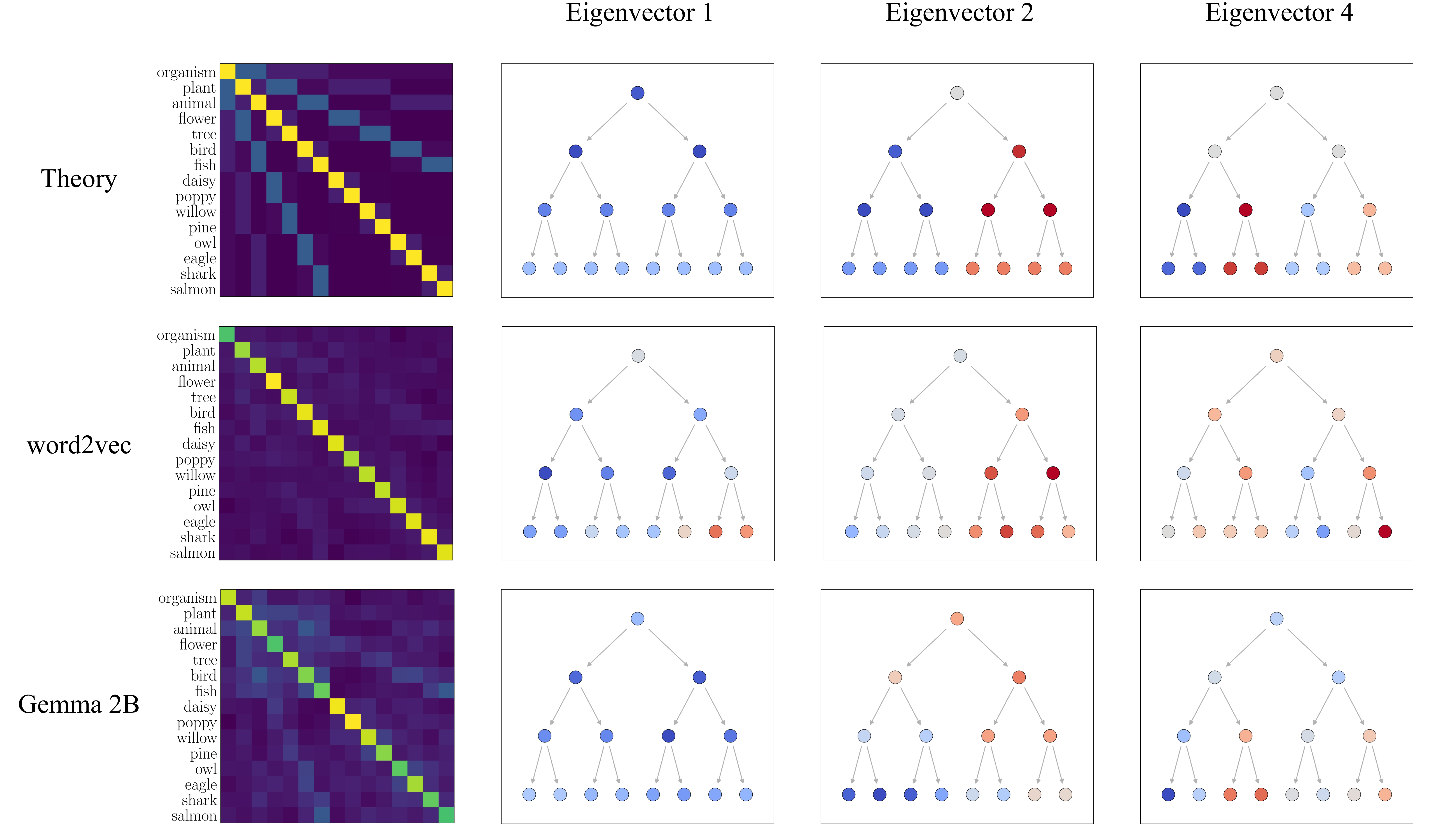}
    \caption{\textbf{Hierarchical splitting geometry in word2vec and LLM representation vectors; example: taxonomy of organisms} (Left) Gram matrices from theory: a fitted exponential kernel \(f(d)=1.967 \cdot e^{-1.235 \cdot  d}\)  (top), word2vec (middle), and Gemma (bottom). Inspection of the eigenstructure reveals qualitative agreement with theory. (Right) Top eigenvectors of each Gram matrix, visualized by projecting each node's representation and coloring by sign (blue: positive; red: negative; white: neutral). The eigenvectors show hierarchical splitting geometry: the first mode is approximately constant; the second splits animals from plants; the third and fourth, which are degenerate, split finer branches. One direction within each degenerate eigenspace is selected for visualization.}
    \label{fig:organism_eigvec}
\end{figure}

Consider a binary tree of depth \(L\), and identify the vocabulary \(\mathcal S\) with its nodes.

\paragraph{Multi-resolution structure on the tree.}

Functions on the nodes of a tree admit a natural multi-resolution decomposition, analogous to Haar wavelets on the line. This basis consists of:

\textit{Scaling basis modes.}
For each level \(\ell=0,1,\dots, L\), let \(\phi_\ell\) denote the normalized indicator of the set of nodes at depth \(\ell\):  \(\phi_\ell\) is constant on level \(\ell\), zero elsewhere, and has unit norm. We refer to \(\phi_\ell\) as the \emph{scaling basis mode at level \(\ell\)}. 

\textit{Wavelet basis modes.}
Fix an internal node \(u\), and let \(T(u)\) denote the subtree rooted at \(u\). For each relative depth \(r=1,\dots,h(u)\), where \(h(u)\) is the height of \(T(u)\), let \(\psi_{u,r}\) denote the normalized Haar wavelet on \(T(u)\) at relative depth \(r\): it is supported on the descendants of \(u\) at that level, has constant magnitude on that support, takes opposite signs on the two sides of the split at \(u\), and has unit norm. We refer to \(\psi_{u,r}\) as the \emph{wavelet basis mode at node \(u\) and relative level \(r\)}. These modes capture hierarchical contrasts inside the subtree rooted at \(u\).

\paragraph{Examples.}

On the organism tree in \cref{fig:figure1}(a), a scaling basis mode assigns the same value to all nodes on a fixed level; for example, the leaf-level scaling basis mode is
\[
\phi_{\ell=2} = \left(e_{\text{flower}}+e_{\text{tree}}+e_{\text{bird}}+e_{\text{fish}}\right)\ /\ 2.
\]
A wavelet basis mode instead contrasts two sibling subtrees; for example,
\[
\psi_{u=\text{organism},r=2}= \left(e_{\text{flower}}+e_{\text{tree}}-e_{\text{bird}}-e_{\text{fish}}\right)\ /\ 2.
\]

\paragraph{Block structure of \(M^\star\).}

Under Assumption~\ref{ass:coocc}, \(M^\star\) is invariant under swapping the two child subtrees of any internal node. To state the resulting invariant subspaces explicitly, define the scaling space
\begin{equation}
    \label{eq:scaling_space}
S_{\mathrm{sc}}:=\operatorname{span}\{\phi_{\ell=0},\dots,\phi_{\ell=L}\},
\end{equation}
and, for each internal node \(u\), define the split space
\begin{equation}
    \label{eq:split_space}
    S^u:=\operatorname{span}\{\psi_{u,r=1},\dots,\psi_{u,r=h(u)}\}.
\end{equation}

\(M^\star\) preserves these subspaces. Multiplication by \(M^\star\) sends a vector \(x\) to \((M^\star x)_i=\sum_j f(\mathrm{dist}(i,j))x_j\). Two nodes at the same depth have equal distance profiles with respect to the other nodes and thus transform equally. If \(x=\psi_{u,r}\), then for any node outside \(T(u)\), the two child subtrees of \(u\) contribute equal weights with opposite signs and cancel; inside \(T(u)\), the result remains antisymmetric across the same split and depends only on relative depth. Hence
\[
M^\star S_{\mathrm{sc}}\subseteq S_{\mathrm{sc}},
\qquad
M^\star S^u\subseteq S^u.
\]

Thus, in the hierarchy-adapted Haar basis, \(M^\star\) decomposes into a scaling block \(A^{\mathrm{sc}}\in\mathbb R^{(L+1)\times(L+1)}\) on \(S_{\mathrm{sc}}\) and wavelet blocks \(A^u\in\mathbb R^{h(u)\times h(u)}\) on each \(S^u\). By symmetry, subtrees of the same height \(h\) have identical wavelet blocks, denoted \(A^{(h)}\). This gives the following structural result.

\begin{theorem}[Hierarchy-aligned eigenvectors]
\label{thm:eigenstructure}
The eigenvectors of \(M^\star\) can be chosen to lie entirely within either the scaling subspace or a wavelet subspace associated with the subtree of height \(h\). Specifically, the \(k\)-th scaling eigenvector has the form

\begin{equation}
\label{eq:scaling_eig}
v^{\mathrm{sc}}_k=\sum_{\ell=0}^L c_{\ell,k}\phi_\ell,
\end{equation}
while the \(k\)-th eigenvector in the wavelet subspace associated with an internal node \(u\) of height \(h\) has the form
\begin{equation}
\label{eq:wavelet_eig}
v^{(u)}_k=\sum_{r=1}^{h} a^{(h)}_{r,k}\psi_{u,r}.
\end{equation}
\end{theorem}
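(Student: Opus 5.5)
The plan is to show that the scaling space and the split spaces give an orthogonal decomposition of $\mathbb R^{\mathcal S}$ into $M^\star$-invariant subspaces. Then the spectral theorem for the symmetric matrix $M^\star$ restricted to each piece gives an orthonormal eigenbasis adapted to the decomposition.

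\textbf{Step 1: orthogonality and completeness.} Each $\phi_\ell$ is constant on level $\ell$. Each $\psi_{u,r}$ is supported on a single level (depth of $u$ plus $r$) and sums to zero there, so $\langle\phi_\ell,\psi_{u,r}\rangle=0$. Two wavelets at different levels have disjoint supports. Two wavelets at the same level with distinct roots $u\neq u'$ either have disjoint subtrees, or, say, $T(u')\subset T(u)$. In the nested case $T(u')$ lies within one child side of $u$, so $\psi_{u,r}$ is constant there, while $\psi_{u',r'}$ sums to zero; hence they are orthogonal. Within one $S^u$, distinct $r$ give disjoint supports. For completeness, count dimensions. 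Level $\ell$ has $2^\ell$ nodes, and its functions are spanned by $\phi_\ell$ together with the $2^\ell-1$ wavelets indexed by the internal nodes above level $\ell$ (the standard Haar basis on $2^\ell$ leaves). Summing over levels gives $\sum_\ell 2^\ell=|\mathcal S|$, so the decomposition is complete.

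\textbf{Step 2: invariance.} This is the step needing care, although the text already sketches it; I would make it rigorous.

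For the scaling space: if $i,i'$ are at the same depth, there is a tree automorphism (a composition of child swaps along their ancestor paths) sending $i$ to $i'$ that preserves depths. Because $\mathrm{dist}$ is automorphism-invariant, $(M^\star\phi_\ell)_i=(M^\star\phi_\ell)_{i'}$. Hence $M^\star\phi_\ell$ is constant on each level, so it lies in $S_{\mathrm{sc}}$.

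For $\psi_{u,r}$: let $\sigma$ be the swap of the two children of $u$. Then $\psi_{u,r}\circ\sigma=-\psi_{u,r}$, and $M^\star$ commutes with $\sigma$, so $M^\star\psi_{u,r}$ is $\sigma$-antisymmetric. This antisymmetry forces the vector to vanish outside $T(u)$, since $\sigma$ fixes those nodes. Inside $T(u)$, consider any automorphism of $T(u)$ that fixes $u$'s children sides (swaps within each child subtree). It preserves $\psi_{u,r}$, so $M^\star\psi_{u,r}$ is invariant under it. Therefore $M^\star\psi_{u,r}$ is constant on each level within each child subtree, and takes opposite values across the two sides. That is exactly a multiple of $\psi_{u,r'}$ for each relative depth $r'$, so $M^\star\psi_{u,r}\in S^u$.

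\textbf{Step 3: conclusion.} Since $M^\star$ is symmetric and leaves each orthogonal summand invariant, its restriction to each summand is a symmetric operator. Its matrix in the orthonormal basis of that summand is $A^{\mathrm{sc}}$ or $A^u$. Diagonalizing each block yields eigenvectors of the forms \eqref{eq:scaling_eig} and \eqref{eq:wavelet_eig}, and together these form an eigenbasis of $M^\star$.

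The block $A^u$ depends only on $h(u)$: its entries $\langle\psi_{u,r},M^\star\psi_{u,r'}\rangle$ involve only distances within $T(u)$, and these depend only on relative depths and split structure. So the coefficients can be taken as $a^{(h)}_{r,k}$ common to all $u$ of height $h$. Notably, none of this uses Assumption~\ref{ass:decay}; that assumption matters only for the ordering of eigenvalues, not for this structure.
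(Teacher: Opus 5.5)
Your proof is correct and follows the paper's argument essentially step for step: you build the orthonormal Haar decomposition from orthogonality plus a dimension count, prove invariance of $S_{\mathrm{sc}}$ via depth-preserving tree automorphisms and invariance of $S^u$ via child-swap antisymmetry plus automorphisms inside each child subtree, and then apply the spectral theorem block by block. The differences are cosmetic. You count dimensions level by level rather than globally, and you get vanishing outside $T(u)$ from $\sigma$-antisymmetry instead of the paper's explicit pairing cancellation; this also forces vanishing at $u$ itself, since $\sigma$ fixes $u$.
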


\paragraph{Ordering of eigenvalues.}

We now use Assumption~\ref{ass:decay} to constrain the ordering of these modes and further characterize the dominant ones. We write \(\lambda^{\mathrm{sc}}_i\) for the \(i\)-th largest eigenvalue of \(A^{\mathrm{sc}}\), and \(\lambda^{(h)}_i \) for the \(i\)-th largest eigenvalue of \(A^{(h)}\).

\begin{theorem}[Coarse-to-fine spectral ordering]
\label{thm:ordering}
Under Assumption~\ref{ass:decay}:

\begin{enumerate}
    \item The largest eigenvalue of \(M^\star\) is \(\lambda^{\mathrm{sc}}_1\). The corresponding eigenvector can be written as in \cref{eq:scaling_eig} with \(c_{\ell,1}>0\) for all \(\ell\). 

    \item The leading eigenvector of each wavelet block \(A^{(h)}\) can be written as in \cref{eq:wavelet_eig} 
    with \(a^{(h)}_{r,1}>0\) for all \(r\).

    \item The wavelet blocks are nested across subtree height: for each \(h\geq 1\), \(A^{(h)}\) is exactly the upper-left \(h\times h\) corner of \(A^{(h+1)}\). Therefore, \(A^{(h)}\) is a principal submatrix of \(A^{(h+1)}\). Consequently, their eigenvalues satisfy Cauchy interlacing:
\[
\lambda^{(h+1)}_1 \ge \lambda^{(h)}_1 \ge \cdots \ge \lambda^{(h+1)}_h \ge \lambda^{(h)}_h \ge \lambda^{(h+1)}_{h+1}.
\]

    \item In particular, the leading wavelet eigenvalues are monotone in subtree height:
    \[
    \lambda^{(L)}_1 \ge \lambda^{(L-1)}_1 \ge \cdots \ge \lambda^{(1)}_1.
    \]
    Thus, dominant non-scaling modes appear coarse-to-fine in the spectrum.
\end{enumerate}
\end{theorem}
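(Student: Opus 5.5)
The plan is to work entirely in the hierarchy-adapted Haar basis of Theorem~\ref{thm:eigenstructure}. I will write the block entries explicitly as
\[
A^{\mathrm{sc}}_{\ell m}=\langle \phi_\ell, M^\star\phi_m\rangle, \qquad A^{u}_{rs}=\langle \psi_{u,r}, M^\star\psi_{u,s}\rangle,
\]
and then apply the Perron--Frobenius theorem to entrywise positive matrices. Parts~1 and~2 follow from positivity of the relevant matrices. Part~3 follows from a direct comparison of entry formulas, and part~4 is a corollary of part~3.

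\textbf{Part 1.} By Assumption~\ref{ass:decay}, every entry \(f(\mathrm{dist}(i,j))\) of \(M^\star\) is strictly positive. Perron--Frobenius then gives a simple largest eigenvalue \(\rho\) of \(M^\star\), with a unique unit eigenvector \(v\) that is entrywise positive. Let \(\sigma\) be any automorphism of the tree that swaps the two child subtrees of an internal node. Then \(M^\star\) commutes with the permutation matrix of \(\sigma\), so \(\sigma v\) is also a positive unit eigenvector for \(\rho\), and uniqueness forces \(\sigma v=v\). These swaps generate a group acting transitively on each level of a perfect binary tree. Hence \(v\) is constant on levels, i.e.\ \(v\in S_{\mathrm{sc}}\) with coordinates \(c_{\ell,1}=\langle v,\phi_\ell\rangle>0\). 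Since \(\rho\) bounds every eigenvalue of every block, \(\rho=\lambda^{\mathrm{sc}}_1\). As a cross-check, \(A^{\mathrm{sc}}_{\ell m}=(n_\ell n_m)^{-1/2}\sum_{i\in\text{level }\ell,\,j\in\text{level }m}f(\mathrm{dist}(i,j))>0\), so Perron--Frobenius applied directly to \(A^{\mathrm{sc}}\) gives the same conclusion.

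\textbf{Part 2, the main step.} I will show that every entry of \(A^{(h)}\) is strictly positive. Let \(D_r^{\pm}\) be the descendants of \(u\) at relative depth \(r\) on the \(\pm\) side, so \(|D_r^{\pm}|=2^{r-1}\). Then
\[
A^u_{rs}=\frac{1}{2^{(r+s)/2}}\Bigl[\sum_{\text{same side}}f(\mathrm{dist}(i,j))-\sum_{\text{opposite sides}}f(\mathrm{dist}(i,j))\Bigr].
\]
Fix \(i\in D_r^+\). Every \(j\in D_s^-\) is reached through \(u\), so \(\mathrm{dist}(i,j)=r+s\). Every \(j\in D_s^+\) is reached through a common ancestor at or below the \(+\) child of \(u\), so \(\mathrm{dist}(i,j)\le r+s-2\). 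Both sets have \(2^{s-1}\) elements, so pairing them arbitrarily and using strict monotonicity of \(f\) makes each paired difference positive. The same argument holds for \(i\in D_r^-\). Hence \(A^u_{rs}>0\), and Perron--Frobenius gives a leading eigenvector with \(a^{(h)}_{r,1}>0\). The delicate point is the distance convention: on the contracted arborescence, distance must be path length through the lowest common ancestor, and I would state this explicitly.

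\textbf{Parts 3 and 4.} The formula above involves only nodes of \(T(u)\) down to relative depth \(\max(r,s)\), together with their mutual distances. In a perfect binary tree, the top \(h\) levels of a subtree of height \(h+1\) are isomorphic, distances included, to a subtree of height \(h\). Hence \(A^{(h+1)}_{rs}=A^{(h)}_{rs}\) for \(r,s\le h\). Cauchy interlacing for a principal submatrix of a symmetric matrix then gives the stated chain, and its first inequality \(\lambda^{(h+1)}_1\ge\lambda^{(h)}_1\), iterated over \(h\), yields part~4.
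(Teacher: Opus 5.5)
Your proof is correct and follows essentially the same route as the paper. The paper also applies Perron--Frobenius to the strictly positive $M^\star$ and to the entrywise-positive split blocks; your per-node pairing of same-side against opposite-side descendants is the paper's equal-coefficient count ($2^{r+s-1}$ on each side) in different bookkeeping. It then uses $A^{(h)}=A^{(h+1)}_{1:h,1:h}$ with Cauchy interlacing, as you do. The only real difference is in item~1: you place the Perron vector in $S_{\mathrm{sc}}$ by its invariance under the child swaps, which act transitively on each level. The paper instead uses simplicity of the eigenvalue to confine it to one invariant summand, and then rules out the split spaces by their support and sign pattern.
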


\paragraph{Interpretation.}

The top row of \cref{fig:organism_eigvec} shows the hierarchical splitting geometry defined by \cref{thm:eigenstructure,thm:ordering}. The first mode is a scaling eigenvector and only varies along depth, so it is less informative about the branching structure. The next modes are the wavelet eigenvectors that resolve the hierarchy from coarse to fine. First comes the root split, then the splits associated with its children, and so on down the tree. In this sense, the top of the spectrum reveals the major taxonomic contrasts before finer within-subtree distinctions.

\paragraph{Degenerate eigenspaces.}
For subtrees of the same height, dominant wavelet modes are degenerate by symmetry, so individual eigenvectors are identifiable only up to rotation within the degenerate eigenspace. This explains why empirical eigenvectors may appear as mixtures of same-level splits, such as the plant- and animal-rooted wavelets in \cref{fig:organism_eigvec}. Proofs of each statement are given in \cref{app:proofs}.

\subsection{Parametric decay model}
\label{sec:parametric_decay}

The results above characterize the qualitative spectral structure under general decay assumptions. To obtain quantitative predictions, we instantiate the kernel \(f\) with a simple parametric form. The parametric model resolves the relative placement of the non-leading scaling modes, which is not fixed by the qualitative theory alone.

 As shown in \cref{fig:global_decay}, mean normalized co-occurrence decays approximately exponentially in distance over the range relevant to our experiments. We therefore model the decay with the following parametric form:
\begin{equation}
    \label{eq:exp_decay}
    f(\mathrm{dist}(i,j)) = \alpha \cdot e^{ -\beta\cdot \mathrm{dist}(i,j)}
\end{equation}
with fitted parameters \(\alpha,\ \beta > 0\).
This form satisfies the positivity and monotonic-decay assumptions of the preceding analysis while providing a flexible two-parameter family that captures the observed behavior.
\subsection{Effect of noise}

In the idealized setting, the hierarchical splitting geometry is exact, but empirical co-occurrence statistics contain fluctuations. Such noise can be treated by regular perturbation theory: it lifts exact degeneracies and mixes directions within nearly degenerate eigenspaces. If small enough, it does not destroy the underlying coarse-to-fine structure. In our empirical tests, we thus use observables designed to be resilient to these effects; these observables are defined in \cref{sec:alignment_metric}.

\section{Empirical testing in word2vec embeddings}

\subsection{Empirical setup}

\subsubsection{Vocabulary and hierarchy construction}
\label{sec:4.1}

We restrict our consideration to words satisfying three conditions: (i) a matching noun word/lemma in WordNet, (ii) a matching string in Gemma's token vocabulary so that our predictions are testable in this LLM, and (iii) no polysemy within the candidate set. The last condition ensures that each retained word has an unambiguous word--sense assignment, so that its hypernym relations are well defined.

To obtain a tree-compatible hierarchy, we convert the WordNet noun hypernym DAG into a rooted arborescence by selecting parent assignments that maximize depth from the global root. We then restrict to eligible words and contract paths through ineligible intermediate nodes. The resulting contracted arborescence defines the distances used in all empirical tests; construction details are given in \cref{app:w2v_construction}. Under this distance, mean normalized co-occurrence decays approximately exponentially as shown in the right panel of \cref{fig:global_decay}. We fit the exponential kernel in \cref{eq:exp_decay} to this empirical decay curve to use as the theoretical object compared to the empirical embedding Gram matrices below.

\subsubsection{word2vec embeddings}
We diagonalize the restricted matrix \(M^\star\) from \cref{eq:mstar} and define word2vec embeddings from its top \(d=2048\) eigenvectors and eigenvalues as in \cref{eq:w2v_form}. We choose \(d=2048\) to match the Gemma representation dimension, ensuring dimensional consistency across the two settings. Additional construction details are given in \cref{app:w2v_construction}.

\subsubsection{Binary subtree sampling}
\label{sec:binary_tree_construction}

For quantitative tests, we sample perfect binary subtrees of depth \(L=3\). We focus on \(L=3\), which is the largest depth that provides reliable sample sizes under our corpus and vocabulary constraints; \cref{app:l2_repeat} reports analogous results for \(L=2\). A valid tree is defined recursively by choosing two distinct valid child subtrees at each internal node. We sample uniformly over valid tree structures using dynamic programming counts, avoiding bias toward individual word sets.

\subsection{Example: the taxonomy of organisms}

We first compare the word2vec Gram matrix for the organism taxonomy of \cref{fig:figure1} with our theory. The top two rows of \cref{fig:organism_eigvec} compare the Gram matrices and leading eigenvectors from each. 
Our central prediction that the leading eigenvectors organize into scaling and split modes is confirmed in the organism example. The first mode is approximately constant at each level and has a consistent sign, while subsequent modes separate subtrees through sign at progressively finer resolutions---one separating plants from animals, others resolving finer contrasts such as bird versus fish or flower versus tree.

\subsection{Quantitative eigenspace alignment across word2vec}
\label{sec:alignment_metric}

 We now test whether our predictions hold systematically across many binary subtrees of depth \(L=3\). We sample eligible subtrees and compare the empirical Gram matrix from the word2vec embeddings with the fitted exponential Gram matrix.

To quantify alignment, we compare subspaces rather than individual eigenvectors. The theoretical spectrum contains exponentially many degeneracies as tree depth increases, so individual same-level split directions are only defined up to rotation within their eigenspace. Empirical fluctuations further perturb these directions, making eigenvector-level comparisons unstable. We therefore use top-\(k\) eigenspace alignment:
\begin{equation}
    \label{eq:top_k_alignment}
    g(k):=
    \frac{1}{k}\|U_k^\top V_k\|_F^2,
\end{equation}
where \(U_k\) and \(V_k\) contain the top \(k\) eigenvectors of the empirical and theoretical Gram matrices, respectively. This quantity approaches unity when the leading eigenspaces of the two matrices are perfectly aligned.

For organism- and cognition-rooted trees, empirical top-\(k\) eigenspaces align substantially above shuffled-label baselines, obtained by globally permuting the word and vector pairs, as shown in the left and middle panels of \cref{fig:micro_evidence}.

\begin{figure}[!htbp]
    \centering
    \includegraphics[width=1.\linewidth]{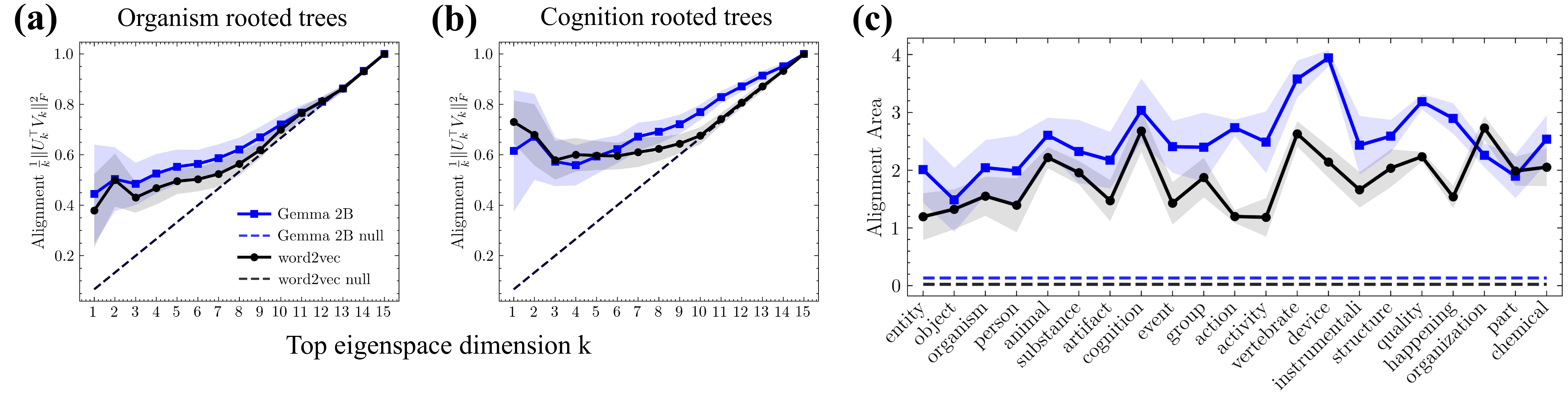}
    \caption{
    \textbf{Hierarchical splitting geometry in binary trees sampled from WordNet.}
    Top-\(k\) eigenspace alignment between theoretical and empirical Gram matrices for (a) organism-rooted trees, and (b) cognition-rooted trees. Panel (c) summarizes all eligible roots using the alignment area from \cref{eq:top_k_alignment_area}. Both word2vec and Gemma unembeddings align substantially above shuffled-label baselines. Shaded bands indicate one standard deviation; each root uses at most 5,000 sampled trees.
    }
    \label{fig:micro_evidence}
\end{figure}

To assess the generality of this finding, we repeat this procedure over all eligible root words with constructible binary trees. For each root, we summarize alignment by the area between the top-\(k\) alignment curve and a null diagonal baseline:
\begin{equation}
    \label{eq:top_k_alignment_area}
    \text{Alignment Area} =\sum^{15}_{k=1}\left (\frac{1}{k}\|U_k^\top V_k\|_F^2 - \frac{k}{15}\right).
\end{equation}
We average this statistic over sampled trees from each root. The right panel of \cref{fig:micro_evidence} shows the distribution across eligible roots together with the shuffled baseline; the empirical alignment area (solid black) is consistently above the null (dashed).

Substantial spectral alignment appears across many semantic hierarchies, not only organism-rooted trees, confirming that a hierarchical splitting geometry is a recurring feature of co-occurrence-based embeddings.

\section{Empirical testing for LLMs}

We now ask whether the same hierarchical splitting geometry appears in LLM representations. We address this in the Gemma 2B model \cite{gemmateam2024gemmaopenmodelsbased}, analyzing its unembedding vectors after centering and whitening to reduce global anisotropy in the representation space.

\subsection{LLM representation processing}

Following Park et al.~\cite{park2024linearrepresentationhypothesisgeometry}, we analyze whitened unembedding vectors \(v_i\):
\(\tilde v_i = \Sigma^{-1/2} (v_i-\mu)\),
where \(\mu\) and \(\Sigma\) are mean and covariance of the full vocabulary unembeddings.  Our conclusions hold similarly with and without whitening. In the main text, we study the former, and in \cref{app:robustness_alternative_embeddings} the latter.

\subsection{Testing our theory across LLM unembeddings}
\label{sec:5.2}

We compare Gram matrices from the processed Gemma unembeddings to those induced by the fitted mean normalized co-occurrence kernel. Remarkably, the organism example in the bottom panel of \cref{fig:organism_eigvec} exhibits a leading scaling mode followed by split modes, as predicted.

The blue curves in all panels of \cref{fig:micro_evidence} quantify this agreement using the top-\(k\) eigenspace alignment (\cref{eq:top_k_alignment,eq:top_k_alignment_area}): across sampled binary trees, the leading eigenspaces of the LLM Gram matrices align substantially better with the theoretical predictions than with the shuffled baseline. Overall, the hierarchical splitting geometry signatures extend remarkably well to the LLM setting.

\paragraph{Connection to previous work:} Beyond eigenspace alignment, we ask whether the same co-occurrence model also explains a concept-level signature previously reported by Park et al.~\cite{park2025geometrycategoricalhierarchicalconcepts}. Park et al. associate each WordNet concept \(w\) with a concept vector \(\bar\ell_w\), intended to represent membership in \(w\) or one of its descendants. Their hierarchical-geometry prediction is that a child concept vector decomposes into a parent component plus a child-specific innovation: for an immediate parent--child pair \((p,w)\), the difference \(\bar\ell_w-\bar\ell_p\) is postulated, for functional reasons, to be orthogonal to the parent vector \(\bar\ell_p\). Thus their diagnostic computes
\[
\cos(\bar\ell_w-\bar\ell_p,\bar\ell_p),
\]
which should concentrate near zero for true parent--child pairs.

In \cref{fig:Parkconsistency}, we apply this same diagnostic both to Gemma unembeddings and to theoretical co-occurrence embeddings constructed from our fitted exponential kernel. In both cases, parent--child innovations are close to (but not exactly, even in our noiseless theoretical setting) orthogonal to the parent vector, while a shuffled-parent baseline is substantially displaced. Thus, the concept-vector orthogonality pattern approximately observed by Park et al. is also reproduced by a purely co-occurrence-driven model. This suggests that the pattern need not be interpreted as evidence for a hierarchy-specific functional mechanism; it can arise from the same spectral structure that produces hierarchical splitting geometry. Details of the concept-vector estimator and parent selection are given in \cref{app:park_consistency}.

\begin{figure}[!htbp]
    \centering
    \includegraphics[width=0.95\linewidth]{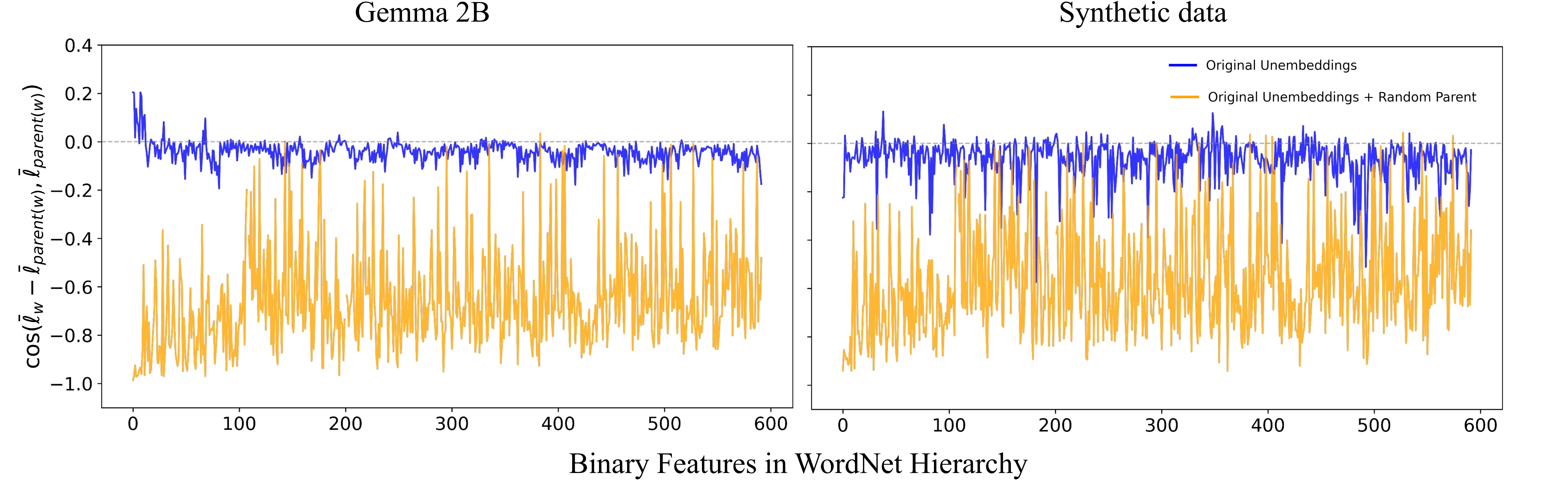}
    
    \caption{
    \textbf{Concept-level orthogonal innovations from co-occurrence statistics.}
    Concept vectors are estimated using 70\% of descendant tokens independently selected for each synset, following Park et al.~\cite{park2025geometrycategoricalhierarchicalconcepts}. Both Gemma unembeddings and theoretical co-occurrence embeddings yield parent--child innovations concentrated near zero, while a shuffled-parent baseline is substantially displaced.
    }
    \label{fig:Parkconsistency}
\end{figure}

\section{Limitations}
\label{sec:limitations}
Our theory applies directly to co-occurrence-driven word2vec-style embeddings, where explicit spectral predictions are available. Although we find the same hierarchical splitting geometry in LLM unembeddings, extending the theory to transformer training dynamics remains a challenge for the field.
It would be particularly interesting to study how LLMs can use context to represent ambiguous words, currently left out of our analysis. 

\section{Conclusion}
We presented a distributional account of hierarchical semantic geometry. Starting from the assumption that word co-occurrence decays with distance on the WordNet hypernym graph, we derived predictions of hierarchical splitting geometry for the word2vec embeddings of any WordNet-induced subtrees. The resulting PCs encode the taxonomy from coarse to fine levels, as confirmed not only in word2vec but also in Gemma unembeddings.
Together with~\cite{korchinski2025emergence,karkada2026symmetry}, this work supports the following broader view: each word is characterized by discrete attributes (such as gender or singular vs.\ plural), continuous attributes (such as season of the year or geographical location), and hierarchical attributes (position in WordNet). Words with similar attributes co-occur more often, and this alone gives rise to the elegant geometrical organization of word embeddings. Such organization may be useful for function, but is not driven by it.

\bibliographystyle{unsrt}

\section*{Acknowledgments}
This work was supported by the Simons Foundation through the Simons Collaboration on the Physics of Learning and Neural Computation (Award ID: SFI-MPS-POL00012574-05), PI Wyart.

\section*{Code availability}
Code for reproducing the experiments and figures will be released in a public repository.

\bibliography{references}

\newpage
\appendix
\section{Proofs for the hierarchy-aligned spectral theory}
\label[app]{app:proofs}

This appendix gives the formal proofs for the theoretical claims in \cref{sec:theory_assumptions,sec:spectral,sec:parametric_decay}.  The purpose of the organization below is to make explicit which assumptions are used in each main-text result.

\paragraph{Proof map.}
\Cref{app:distance_kernel_reduction} justifies the reduction from the co-occurrence model in \cref{ass:coocc} to a distance kernel.  \Cref{app:binary_tree_setup} fixes the idealized binary-tree setting used in \cref{thm:eigenstructure,thm:ordering}.  \Cref{app:proof_thm_eigenstructure} proves \cref{thm:eigenstructure}.  \Cref{app:proof_thm_ordering} proves each item of \cref{thm:ordering}.  \Cref{app:exp_kernel_proof} records the additional algebra used by the exponential model in \cref{sec:parametric_decay}.  \Cref{app:perturbation_comment} explains the perturbative statement made in the noise discussion.

\paragraph{Assumption map.}
The following table records the assumptions used by each result.

\begin{center}
\small
\setlength{\tabcolsep}{4pt}
\renewcommand{\arraystretch}{1.15}
\begin{tabularx}{\linewidth}{@{}p{0.24\linewidth}X X@{}}
\toprule
Result & Main-text claim & Assumptions used \\
\midrule
\Cref{prop:distance_kernel_reduction}
& \cref{eq:mstar_tree}
& \cref{ass:coocc} with $\varepsilon_{ij}=0$ \\

\Cref{thm:eigenstructure}
& hierarchy-aligned eigenvectors
& full binary tree; $M_{ij}=f(\mathrm{dist}(i,j))$ \\

\Cref{thm:ordering}, item 1
& top eigenvector is scaling and positive
& full binary tree; $f(d)>0$ \\

\Cref{thm:ordering}, item 2
& leading wavelet eigenvectors are sign-coherent
& full binary tree; $f$ strictly decreasing \\

\Cref{thm:ordering}, items 3--4
& nested split blocks and coarse-to-fine ordering
& full binary tree; $M_{ij}=f(\mathrm{dist}(i,j))$ \\

\Cref{prop:exp_rank_one}
& exponential-kernel interlacing
& $f(d)=\alpha e^{-\beta d}$ with $\alpha,\beta>0$ \\
\bottomrule
\end{tabularx}
\end{center}

\subsection{From the co-occurrence model to a distance kernel}
\label[app]{app:distance_kernel_reduction}

\begin{proposition}[Distance-kernel reduction]
\label{prop:distance_kernel_reduction}
Under \cref{ass:coocc}, if the fluctuation term is neglected, i.e. $\varepsilon_{ij}=0$, then there exists a scalar function $f$ such that
\begin{equation}
    M^\star_{ij}=f(\mathrm{dist}(i,j)).
\end{equation}
Moreover, if the mean normalized co-occurrence is positive and decreasing as a function of distance, then $f$ satisfies \cref{ass:decay}.
\end{proposition}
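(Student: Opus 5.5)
The plan is to substitute the model of \cref{ass:coocc} with \(\varepsilon_{ij}=0\) into the definition \cref{eq:mstar} and check that the unigram factors cancel. Write \(R_{ij}:=P_{ij}/(P_iP_j)\). Under the assumption, \(R_{ij}=\widetilde C(\mathrm{dist}(i,j))\). Dividing numerator and denominator of \cref{eq:mstar} by \(P_iP_j>0\), which is legitimate since every vocabulary word has positive unigram probability, gives
\[
M^\star_{ij}=\frac{R_{ij}-1}{\tfrac12(R_{ij}+1)}=g(R_{ij}),\qquad g(x):=\frac{2(x-1)}{x+1}.
\]
Setting \(f:=g\circ\widetilde C\) yields \(M^\star_{ij}=f(\mathrm{dist}(i,j))\), which is the first claim. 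The only thing to confirm is that the denominator is nonzero. Since \(R_{ij}\ge 0\) as a ratio of probabilities, we have \(x+1\ge 1\), so \(g\) is well defined on \([0,\infty)\).

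For the second claim, I read ``mean normalized co-occurrence'' in the sense of \cref{fig:global_decay}, namely the quantity \(M^\star\) itself averaged over pairs at a given distance. With \(\varepsilon=0\) every pair at distance \(d\) takes the same value \(f(d)\), so this mean equals \(f(d)\). The hypothesis that it is positive and decreasing then gives \cref{ass:decay} directly.

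If the statement is instead read as being about the ratio \(\widetilde C\), I would argue through \(g\).
\begin{itemize}
\item \(g'(x)=4/(x+1)^2>0\), so \(g\) is strictly increasing. Hence \(\widetilde C\) strictly decreasing implies \(f=g\circ\widetilde C\) strictly decreasing.
\item \(g(x)>0\) if and only if \(x>1\). So positivity of \(f\) corresponds to \(\widetilde C>1\), i.e.\ excess co-occurrence over independence, and I would state that reading of ``positive'' explicitly.
\end{itemize}

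I expect the main obstacle to be interpretive rather than technical: fixing which quantity ``mean normalized co-occurrence'' refers to, and making ``decreasing'' strict to match \cref{ass:decay}. Under either reading, monotonicity of \(g\) carries the argument.
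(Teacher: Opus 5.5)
Your proposal is correct and follows the paper's proof: substitute $\varepsilon_{ij}=0$, observe that $M^\star_{ij}$ is the fixed transform $2(R-1)/(R+1)$ of the ratio $R_{ij}=\widetilde C(\mathrm{dist}(i,j))$, and define $f$ as that transform composed with $\widetilde C$. The second claim is then a direct restatement of the hypothesis under your first reading, which matches the paper's use of $M^\star$ in \cref{fig:global_decay}. Your extra checks are sound refinements that the paper leaves implicit: the denominator check, strict versus non-strict monotonicity, and the observation that positivity of $f$ corresponds to $\widetilde C>1$ rather than $\widetilde C>0$.
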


\begin{proof}
With $\varepsilon_{ij}=0$, \cref{ass:coocc} gives
\begin{equation}
    \frac{P_{ij}}{P_iP_j}=\widetilde C(\mathrm{dist}(i,j)).
\end{equation}
The statistic in \cref{eq:mstar} is a deterministic scalar transform of this ratio:
\begin{equation}
    M^\star_{ij}
    =
    \frac{P_{ij}-P_iP_j}{\frac12(P_{ij}+P_iP_j)}
    =
    \frac{\widetilde C(\mathrm{dist}(i,j))-1}{\frac12(\widetilde C(\mathrm{dist}(i,j))+1)}.
\end{equation}
Thus $M^\star_{ij}=f(\mathrm{dist}(i,j))$ with
\begin{equation}
    f(d):=\frac{\widetilde C(d)-1}{\frac12(\widetilde C(d)+1)}.
\end{equation}
The final statement is exactly the empirical positivity and monotonicity condition stated as \cref{ass:decay}.
\end{proof}

\subsection{Binary-tree notation and Haar basis}
\label[app]{app:binary_tree_setup}

Let $T_L$ be the full binary tree of depth $L$, containing all nodes at depths $0,1,\ldots,L$.  Let $V$ be the real vector space of functions on the nodes of $T_L$, with standard basis vectors $e_i$.  For a node $u$, let $|u|$ denote its depth, let $h(u):=L-|u|$ denote the height of the subtree rooted at $u$, and let $T_r(u)$ denote the descendants of $u$ at relative depth $r$.  Thus $|T_r(u)|=2^r$.

The scaling modes are
\begin{equation}
\label{eq:app_scaling_mode}
    \phi_\ell
    :=
    2^{-\ell/2}\sum_{i\in T_\ell(0)} e_i,
    \qquad \ell=0,1,\ldots,L.
\end{equation}
If $u$ is internal, write its two children as $u_+$ and $u_-$.  The wavelet modes rooted at $u$ are
\begin{equation}
\label{eq:app_wavelet_mode}
    \psi_{u,r}
    :=
    2^{-r/2}
    \left(
    \sum_{i\in T_{r-1}(u_+)} e_i
    -
    \sum_{i\in T_{r-1}(u_-)} e_i
    \right),
    \qquad r=1,\ldots,h(u).
\end{equation}
Equivalently, $\psi_{u,r}$ is supported on $T_r(u)$, is constant on each side of the split at $u$, takes opposite signs on the two sides, and has unit norm.

Define
\begin{equation}
    S_{\mathrm{sc}}:=\operatorname{span}\{\phi_0,\ldots,\phi_L\},
    \qquad
    S^u:=\operatorname{span}\{\psi_{u,1},\ldots,\psi_{u,h(u)}\}.
\end{equation}
These are the spaces defined in \cref{eq:scaling_space,eq:split_space}.

\begin{lemma}[Orthogonal Haar decomposition]
\label{lem:haar_basis}
The collection
\begin{equation}
    \mathcal H
    :=
    \{\phi_0,\ldots,\phi_L\}
    \cup
    \{\psi_{u,r}: u \text{ internal},\ r=1,\ldots,h(u)\}
\end{equation}
is an orthonormal basis of $V$.  Equivalently,
\begin{equation}
\label{eq:app_orthogonal_decomp}
    V
    =
    S_{\mathrm{sc}}
    \oplus
    \bigoplus_{u\ \mathrm{internal}} S^u .
\end{equation}
\end{lemma}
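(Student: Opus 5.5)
The plan is to prove orthonormality directly from the supports and signs of the modes, and then to obtain spanning by counting dimensions. Both steps use only the explicit formulas \cref{eq:app_scaling_mode,eq:app_wavelet_mode}.

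\emph{Normalization.} The set $T_\ell(0)$ has $2^\ell$ nodes, so $\|\phi_\ell\|^2=2^{-\ell}\cdot 2^\ell=1$. The mode $\psi_{u,r}$ has $2^{r-1}+2^{r-1}=2^r$ nonzero entries, each equal to $\pm 2^{-r/2}$, so it also has unit norm.

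\emph{Orthogonality.} Every mode is supported on a single depth: $\phi_\ell$ on depth $\ell$, and $\psi_{u,r}$ on depth $|u|+r$. Modes at different depths therefore have disjoint supports and are orthogonal. It remains to treat pairs of modes at the same depth $m$.

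\begin{enumerate}
\item Take $\phi_m$ and $\psi_{u,r}$ with $|u|+r=m$. Since $\phi_m$ is constant on depth $m$, the inner product is proportional to $\sum_i(\psi_{u,r})_i$. This sum vanishes because the two sides of the split contribute equally many entries, $2^{r-1}$ each, with opposite signs.
\item Take two wavelets $\psi_{u,r}$ and $\psi_{u',r'}$ with $u\neq u'$ and $|u|+r=|u'|+r'=m$. There are two subcases.
\begin{enumerate}
\item If neither of $u,u'$ is an ancestor of the other, their subtrees are disjoint, so the supports are disjoint.
\item If $u'$ is a proper descendant of $u$, then $T(u')$ lies inside one child subtree of $u$, say $T(u_+)$. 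On the support of $\psi_{u',r'}$, the mode $\psi_{u,r}$ is therefore constant. The inner product is then a constant multiple of $\sum_i(\psi_{u',r'})_i$, which is $0$ by the same balance argument as in item 1.
\end{enumerate}
\end{enumerate}
Same-depth pairs with the same root $u$ cannot occur unless $r=r'$, in which case the two modes coincide. This finishes orthonormality.

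\emph{Spanning.} An orthonormal family is linearly independent, so it suffices to check that $|\mathcal H|=\dim V=2^{L+1}-1$. The scaling modes contribute $L+1$ vectors. Each internal node $u$ at depth $k$ contributes $h(u)=L-k$ wavelets, and there are $2^k$ such nodes for $k=0,\dots,L-1$. Using $\sum_{k\ge0}k2^k$-type identities, the wavelet count is
\[
\sum_{k=0}^{L-1}2^k(L-k)=2^{L+1}-L-2 .
\]
The total is $(L+1)+(2^{L+1}-L-2)=2^{L+1}-1$, as required. The direct-sum statement \cref{eq:app_orthogonal_decomp} then follows by grouping the basis vectors by subspace; mutual orthogonality of the groups comes from the pairwise orthogonality shown above.

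A cleaner alternative for the count, and the step I would double-check most, is to work level by level. Depth $m$ contributes the $2^m$-dimensional space of functions on that level. On it we have one scaling mode and one wavelet $\psi_{u,m-|u|}$ for each internal node $u$ at depth less than $m$; there are $2^m-1$ such nodes. That gives exactly $2^m$ orthonormal vectors, so they span the level, which gives spanning directly. The main obstacle is carefully handling the ancestor/descendant case in the orthogonality argument; the rest is bookkeeping.
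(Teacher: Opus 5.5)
Your proof is correct and follows the paper's argument essentially step for step. You get unit norms from the normalization, orthogonality from disjoint depth layers plus the zero-sum cancellation (in both the scaling--wavelet and ancestor--descendant cases), and spanning from the count $(L+1)+\sum_{d=0}^{L-1}2^d(L-d)=2^{L+1}-1$; your level-by-level variant, with $2^m$ orthonormal vectors on each depth-$m$ layer, is a valid but optional refinement of that count.
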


\begin{proof}
The normalization in \cref{eq:app_scaling_mode,eq:app_wavelet_mode} gives unit norm.  Scaling modes at distinct depths have disjoint support, so they are orthogonal.  A scaling mode and a wavelet mode either have disjoint support or lie on the same depth layer; in the latter case the inner product is proportional to the sum of the $+$ and $-$ coefficients of the wavelet, which cancel.

Now consider two wavelet modes.  If their supports are on different depth layers, their inner product is zero.  If their rooted subtrees are disjoint, their supports are disjoint.  If one root is an ancestor of the other, then the ancestor wavelet is constant on the descendant subtree at the relevant depth, while the descendant wavelet has zero sum across the two children of its root; hence their inner product is zero.  Finally, if they have the same root, then distinct relative depths have disjoint support and the same relative depth gives unit norm.

It remains only to count dimensions.  The number of modes is
\begin{equation}
    (L+1)+\sum_{d=0}^{L-1}2^d(L-d)
    =
    2^{L+1}-1,
\end{equation}
which equals the number of nodes of $T_L$.  Hence the orthonormal collection is a basis.
\end{proof}

Throughout the rest of this appendix, let
\begin{equation}
\label{eq:app_distance_kernel}
    M_{ij}=f(\mathrm{dist}(i,j))
\end{equation}
be the idealized distance-kernel matrix on $T_L$.  In the main text, this matrix is $M^\star$ after applying \cref{prop:distance_kernel_reduction}.

\subsection{Proof of \cref{thm:eigenstructure}: hierarchy-aligned eigenvectors}
\label[app]{app:proof_thm_eigenstructure}

\begin{lemma}[Invariance of the scaling and split spaces]
\label{lem:block_invariance}
For $M$ defined by \cref{eq:app_distance_kernel},
\begin{equation}
    M S_{\mathrm{sc}}\subseteq S_{\mathrm{sc}},
    \qquad
    M S^u\subseteq S^u
    \quad\text{for every internal node }u.
\end{equation}
Consequently, in the basis $\mathcal H$, $M$ is block diagonal with one block on $S_{\mathrm{sc}}$ and one block on each $S^u$.
\end{lemma}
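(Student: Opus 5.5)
I will prove the lemma with a symmetry argument: $M$ commutes with every automorphism of the rooted tree $T_L$, and the spaces $S_{\mathrm{sc}}$ and $S^u$ can be characterized through these automorphisms. For each internal node $u$, let $\sigma_u$ be the tree automorphism that swaps the two child subtrees $T(u_+)$ and $T(u_-)$ in the canonical (order-preserving) way and fixes every other node. Tree automorphisms preserve graph distance, so $f(\mathrm{dist}(\sigma i,\sigma j))=f(\mathrm{dist}(i,j))$. Writing $P_\sigma$ for the corresponding permutation matrix, this gives $P_\sigma M P_\sigma^\top = M$, i.e. $M$ commutes with every $P_\sigma$ in the group $G$ generated by the swaps $\sigma_u$. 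Consequently $M$ preserves the fixed subspace of any subgroup of $G$, as well as the $(-1)$-eigenspace of any single $P_{\sigma_u}$.

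\textbf{Scaling space.} I claim that $S_{\mathrm{sc}}$ is exactly the subspace of $V$ fixed by all of $G$. The group $G$ acts transitively on each depth level, since any node at depth $\ell$ can be moved to any other by composing swaps along the two root paths. Hence the $G$-invariant functions are precisely the functions constant on levels, which is $S_{\mathrm{sc}}$. If $x\in S_{\mathrm{sc}}$, then $P_\sigma Mx = MP_\sigma x = Mx$ for every $\sigma\in G$, so $Mx\in S_{\mathrm{sc}}$.

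\textbf{Split spaces.} I will characterize $S^u$ as the set of $x\in V$ satisfying three conditions:
\begin{enumerate}
\item $x$ is supported on $T(u)\setminus\{u\}$;
\item $P_{\sigma_u}x=-x$;
\item $P_\sigma x=x$ for every $\sigma$ in the subgroup $G_u$ generated by the swaps $\sigma_w$ at the proper descendants $w$ of $u$.
\end{enumerate}
Conditions (2) and (3) together force $x$ to be constant at each relative depth within $T(u_+)$, and equal to minus that constant within $T(u_-)$. This is exactly $\operatorname{span}\{\psi_{u,r}\}$, and the dimension count $h(u)$ matches. Each $\sigma\in G_u$ commutes with $M$ and satisfies $\sigma_u\sigma\sigma_u^{-1}\in G_u$, so $M$ preserves the joint structure of conditions (2) and (3): the $(-1)$-eigenspace of $P_{\sigma_u}$ intersected with the $G_u$-fixed space.

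The step I expect to be the main obstacle is condition (1): showing that $Mx$ vanishes outside $T(u)$ and at $u$ itself. For this I will use the direct cancellation argument. Any node $i$ outside the strict descendants of $u$, including $u$ itself, is fixed by $\sigma_u$, so $\mathrm{dist}(i,j)=\mathrm{dist}(i,\sigma_u j)$. Therefore
\[
(Mx)_i=\sum_j f(\mathrm{dist}(i,j))x_j=\sum_j f(\mathrm{dist}(i,\sigma_u j))x_j=-(Mx)_i ,
\]
where the last equality uses the substitution $j\mapsto\sigma_u j$ together with $x_{\sigma_u j}=-x_j$. Hence $(Mx)_i=0$ at all such $i$. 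Similarly, $G_u$-invariance of $Mx$ holds because $G_u$ commutes with $M$.

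Combining the three conditions gives $MS^u\subseteq S^u$. Block diagonality of $M$ in the basis $\mathcal H$ then follows from \cref{lem:haar_basis}, together with the symmetry of $M$ and these invariances.
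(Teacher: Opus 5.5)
Your proof is correct and follows essentially the same route as the paper's. Both arguments rest on the invariance of $M$ under tree automorphisms: the child-swap pairing gives cancellation outside $T(u)$, and automorphisms inside each child subtree give constancy on relative-depth layers. You package this as fixed subspaces and a $(-1)$-eigenspace of permutation operators commuting with $M$, and you state explicitly the vanishing at the node $u$ itself, which the paper leaves implicit. One small simplification: your condition (1) already follows from condition (2), because $\sigma_u$ fixes every node that is not a strict descendant of $u$, so any $x$ with $P_{\sigma_u}x=-x$ vanishes there. The step you flagged as the main obstacle is therefore automatic once you note $P_{\sigma_u}Mx=MP_{\sigma_u}x=-Mx$.
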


\begin{proof}
First let $x\in S_{\mathrm{sc}}$.  Then $x$ is constant on every depth layer.  If $i$ and $i'$ have the same depth, there is a tree automorphism sending $i$ to $i'$ and preserving all depth layers.  Since $\mathrm{dist}(i,j)$ is invariant under tree automorphisms, the sums
\begin{equation}
    (Mx)_i=\sum_j f(\mathrm{dist}(i,j))x_j,
    \qquad
    (Mx)_{i'}=\sum_j f(\mathrm{dist}(i',j))x_j
\end{equation}
are equal.  Hence $Mx$ is also constant on depth layers, so $Mx\in S_{\mathrm{sc}}$.

Now fix an internal node $u$ and let $x\in S^u$.  Such an $x$ is supported on $T(u)$, is antisymmetric under the swap of the two children of $u$, and is constant on each relative depth layer within each child subtree.  If $i\notin T(u)$, then every node $j$ on the $+$ side of $u$ can be paired with the corresponding node $j'$ on the $-$ side.  These paired nodes satisfy $\mathrm{dist}(i,j)=\mathrm{dist}(i,j')$ and $x_j=-x_{j'}$, so their contributions cancel.  Thus $(Mx)_i=0$ for all $i\notin T(u)$.

For $i\in T(u)$, the same child-swap symmetry gives antisymmetry of $Mx$ across the two sides of the split at $u$.  Moreover, automorphisms acting inside either child subtree show that $(Mx)_i$ depends only on the side of $u$ containing $i$ and on the relative depth of $i$ below $u$.  Therefore $Mx$ is supported on $T(u)$, is antisymmetric across the split at $u$, and is constant on relative depth layers within each side.  This is exactly the space $S^u$.
\end{proof}

\begin{proof}[Proof of \cref{thm:eigenstructure}]
By \cref{lem:haar_basis}, $V$ decomposes orthogonally as
\begin{equation}
    V=S_{\mathrm{sc}}\oplus\bigoplus_{u\ \mathrm{internal}}S^u.
\end{equation}
By \cref{lem:block_invariance}, each summand is invariant under the symmetric matrix $M$.  Therefore $M$ restricts to a symmetric linear operator on each summand, and each restriction admits an orthonormal eigenbasis.  Taking the union of these eigenbases gives an eigenbasis of $M$ whose vectors lie entirely within the scaling space or within a single split space.

The eigenvectors in the scaling block therefore have the form
\begin{equation}
    v^{\mathrm{sc}}_k=\sum_{\ell=0}^L c_{\ell,k}\phi_\ell,
\end{equation}
and the eigenvectors in the split block rooted at $u$ have the form
\begin{equation}
    v^{(u)}_k=\sum_{r=1}^{h(u)} a^{(h(u))}_{r,k}\psi_{u,r}.
\end{equation}
These are exactly \cref{eq:scaling_eig,eq:wavelet_eig}.
\end{proof}

\subsection{Closed-form block entries}
\label[app]{app:closed_form_blocks}

This subsection records the block formulas used in the proof of \cref{thm:ordering}.  Let $A^{\mathrm{sc}}$ be the matrix of $M|_{S_{\mathrm{sc}}}$ in the basis $(\phi_0,\ldots,\phi_L)$, and let $A^{(h)}$ be the common matrix of $M|_{S^u}$ for any node $u$ of height $h$, in the basis $(\psi_{u,1},\ldots,\psi_{u,h})$.

\begin{lemma}[Scaling block]
\label{lem:scaling_block_formula}
For $\ell,m=0,\ldots,L$,
\begin{equation}
\label{eq:app_closed_scaling_form}
A^{\mathrm{sc}}_{\ell,m}
=
2^{-(\ell+m)/2}
\left[
\sum_{a=0}^{\min(\ell,m)-1}
2^{\ell+m-a-1}f(\ell+m-2a)
+
2^{\max(\ell,m)}f(|\ell-m|)
\right],
\end{equation}
where the empty sum is interpreted as zero.
\end{lemma}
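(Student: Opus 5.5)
The formula is a direct computation of $A^{\mathrm{sc}}_{\ell,m}=\phi_\ell^\top M\phi_m$. By \cref{eq:app_scaling_mode},
\[
A^{\mathrm{sc}}_{\ell,m}=2^{-(\ell+m)/2}\sum_{i\in T_\ell(0)}\sum_{j\in T_m(0)} f(\mathrm{dist}(i,j)).
\]
So it suffices to show that the double sum equals the bracket in \cref{eq:app_closed_scaling_form}. By symmetry of $f(\mathrm{dist}(i,j))$ in $i,j$ (and of the claimed formula in $\ell,m$), I assume without loss of generality that $\ell\le m$.

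I would organize the double sum by the depth $a$ of the lowest common ancestor $w$ of the pair $(i,j)$. In a tree, $\mathrm{dist}(i,j)=|i|+|j|-2|w|=\ell+m-2a$, and $0\le a\le \ell$.

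\textbf{Case $a=\ell$.} Here $w=i$, so $j$ is a descendant of $i$ at relative depth $m-\ell$ and the distance is $m-\ell=|\ell-m|$. Each of the $2^\ell$ nodes $i$ has $2^{m-\ell}$ such descendants, giving $2^{m}=2^{\max(\ell,m)}$ pairs. These pairs contribute $2^{\max(\ell,m)}f(|\ell-m|)$. When $\ell=m$ this case is the diagonal $i=j$, contributing $2^\ell f(0)$, which is consistent with the formula.

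\textbf{Case $a<\ell$.} There are $2^a$ choices of $w$ at depth $a$. The nodes $i$ and $j$ must then descend from different children of $w$. For $i$ there are $2$ choices of child and $2^{\ell-a-1}$ descendants at the required depth. Given that child, $j$ must lie below the other child, which gives $2^{m-a-1}$ options. The pair count is therefore
\[
2^a\cdot 2\cdot 2^{\ell-a-1}\cdot 2^{m-a-1}=2^{\ell+m-a-1},
\]
each pair at distance $\ell+m-2a$. Summing over $a=0,\dots,\ell-1=\min(\ell,m)-1$ yields the first sum in the bracket.

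The cases $a=\ell$ and $a<\ell$ partition all pairs, and adding them recovers the bracket exactly. The only step needing care, though it is routine, is this pair count in the LCA decomposition. Specifically, one must ensure that when $a<\ell$ the two nodes are forced into distinct child subtrees, with no double counting from the choice of which side $i$ lies on. As a sanity check I would verify the total: $2^m+\sum_{a<\ell}2^{\ell+m-a-1}=2^m+2^m(2^\ell-1)=2^{\ell+m}$, which matches $|T_\ell(0)|\,|T_m(0)|$.
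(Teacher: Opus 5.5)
Your proposal is correct and follows essentially the same route as the paper: expand $\phi_\ell^\top M\phi_m$, group ordered pairs by the depth $a$ of their lowest common ancestor, count $2^{\ell+m-a-1}$ pairs at distance $\ell+m-2a$ for $a<\min(\ell,m)$ and $2^{\max(\ell,m)}$ ancestor--descendant pairs at distance $|\ell-m|$. Your explicit child-choice count and the total-count check $2^{\ell+m}$ are details the paper omits, and both are correct.
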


\begin{proof}
By definition,
\begin{equation}
    A^{\mathrm{sc}}_{\ell,m}
    =
    2^{-(\ell+m)/2}
    \sum_{i\in T_\ell(0)}\sum_{j\in T_m(0)}f(\mathrm{dist}(i,j)).
\end{equation}
Group ordered pairs $(i,j)$ by the depth $a$ of their lowest common ancestor.  If the lowest common ancestor is strictly above both nodes, then $a=0,\ldots,\min(\ell,m)-1$, the distance is $\ell+m-2a$, and the number of ordered pairs with that value of $a$ is $2^{\ell+m-a-1}$.  The remaining pairs are those in which one node is ancestral to the other, including $i=j$ when $\ell=m$.  Their distance is $|\ell-m|$, and their number is $2^{\max(\ell,m)}$.  Substitution gives \cref{eq:app_closed_scaling_form}.
\end{proof}

\begin{lemma}[Split block]
\label{lem:split_block_formula}
For $r,s=1,\ldots,h$,
\begin{equation}
\label{eq:app_closed_split_form}
A^{(h)}_{r,s}
=
2^{-(r+s)/2}
\left[
-
2^{r+s-1}f(r+s)
+
\sum_{a=1}^{\min(r,s)-1}
2^{r+s-a-1}f(r+s-2a)
+
2^{\max(r,s)}f(|r-s|)
\right].
\end{equation}
\end{lemma}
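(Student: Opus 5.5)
The plan is to mirror the proof of \cref{lem:scaling_block_formula}. By \cref{lem:block_invariance} and the orthonormality in \cref{lem:haar_basis}, the matrix of $M|_{S^u}$ in the basis $(\psi_{u,1},\ldots,\psi_{u,h})$ has entries $A^{(h)}_{r,s}=\langle \psi_{u,r}, M\psi_{u,s}\rangle$. Substituting \cref{eq:app_wavelet_mode} gives
\begin{equation*}
A^{(h)}_{r,s}=2^{-(r+s)/2}\sum_{i\in T_r(u)}\sum_{j\in T_s(u)}\sigma_i\sigma_j\, f(\mathrm{dist}(i,j)),
\end{equation*}
where $\sigma_i=\pm1$ records on which side of the split at $u$ the node $i$ lies. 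I will then partition the ordered pairs $(i,j)$ according to the relative depth $a$ (below $u$) of their lowest common ancestor. In each class I will record three things: the sign $\sigma_i\sigma_j$, the distance, and the number of pairs.

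\emph{Case $a=0$ (lowest common ancestor equal to $u$).} Since $r,s\ge1$, the nodes $i$ and $j$ lie in different child subtrees of $u$. Hence $\sigma_i\sigma_j=-1$ and $\mathrm{dist}(i,j)=r+s$. The number of such pairs is $2\cdot 2^{r-1}2^{s-1}=2^{r+s-1}$, with the factor $2$ coming from which side contains $i$. This produces the term $-2^{r+s-1}f(r+s)$.

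\emph{Case $1\le a\le \min(r,s)-1$ with the ancestor strictly above both nodes.} Both nodes lie on the same side of $u$, so $\sigma_i\sigma_j=+1$ and $\mathrm{dist}(i,j)=r+s-2a$. To count, there are $2^a$ candidate ancestors $w$. For each $w$, the nodes $i$ and $j$ must lie under different children of $w$, which gives $2\cdot 2^{r-a-1}2^{s-a-1}$ ordered pairs. The total is $2^{r+s-a-1}$, matching the middle sum.

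\emph{Remaining case: one node is an ancestor of the other (including $i=j$ when $r=s$).} The sign is again $+1$ and $\mathrm{dist}(i,j)=|r-s|$. The deeper node determines the pair uniquely, so the count is $2^{\max(r,s)}$.

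Summing the three contributions yields \cref{eq:app_closed_split_form}. Since $|T_r(u)|=2^r$ depends only on $r$, the resulting expression depends only on $r$, $s$ and $f$, not on $u$ or $h$. This also confirms that the block $A^{(h)}$ is the same for all nodes of height $h$.

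The step I expect to need the most care is the bookkeeping at the boundary $a=\min(r,s)$. When $r\neq s$, the lowest common ancestor at that depth is the shallower node itself, so those pairs belong to the ancestor case and must not be counted in the sum. When $r=s$, only the diagonal pairs $i=j$ occur there. This explains why the sum stops at $\min(r,s)-1$ and begins at $a=1$ rather than $a=0$, the $a=0$ contribution having been split off with its negative sign.
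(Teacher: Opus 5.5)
Your proposal is correct and follows essentially the same route as the paper. Like the paper, you expand $\psi_{u,r}^\top M\psi_{u,s}$, sign each ordered pair by which side of the split at $u$ its nodes lie on, and group pairs by the relative depth of their lowest common ancestor, obtaining the same three classes with counts $2^{r+s-1}$, $2^{r+s-a-1}$ and $2^{\max(r,s)}$. Your per-ancestor count and your handling of the boundary $a=\min(r,s)$ spell out details the paper leaves implicit.
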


\begin{proof}
Fix a node $u$ of height $h$.  In the bilinear form
\begin{equation}
    \psi_{u,r}^\top M\psi_{u,s},
\end{equation}
pairs on opposite sides of the split at $u$ enter with negative sign, while pairs on the same side enter with positive sign.  There are $2^{r+s-1}$ ordered pairs on opposite sides, and every such pair has distance $r+s$, giving the first term.

For pairs on the same side of the split, group by the depth $a$ of the lowest common ancestor relative to $u$.  If the lowest common ancestor lies strictly below $u$ and strictly above both nodes, then $a=1,\ldots,\min(r,s)-1$, the distance is $r+s-2a$, and the number of ordered pairs is $2^{r+s-a-1}$.  The remaining same-side pairs are those in which one node is ancestral to the other, including the diagonal case, and there are $2^{\max(r,s)}$ such ordered pairs at distance $|r-s|$.  Including the normalization of the two wavelets gives \cref{eq:app_closed_split_form}.
\end{proof}

\subsection{Proof of \cref{thm:ordering}: coarse-to-fine spectral ordering}
\label[app]{app:proof_thm_ordering}

We prove the four items of \cref{thm:ordering} in order.

\paragraph{Item 1: the largest eigenvalue lies in the scaling block.}
Assume $f(d)>0$ for all relevant distances.  Then $M$ is a strictly positive symmetric matrix.  By the Perron--Frobenius theorem, its largest eigenvalue is simple and has an eigenvector $v_{\mathrm{PF}}$ with strictly positive entries.

By \cref{thm:eigenstructure}, every eigenvector of $M$ can be chosen to lie in one of the blocks in \cref{eq:app_orthogonal_decomp}.  Since the Perron--Frobenius eigenvalue is simple, $v_{\mathrm{PF}}$ itself must lie entirely in one of these invariant summands.  It cannot lie in any split space $S^u$: if $u$ is not the root, every vector in $S^u$ vanishes outside $T(u)$, and if $u$ is the root, every nonzero vector in $S^u$ has opposite signs on the two sides of the root split.  Therefore $v_{\mathrm{PF}}\in S_{\mathrm{sc}}$.

Writing
\begin{equation}
    v_{\mathrm{PF}}=\sum_{\ell=0}^L c_{\ell,1}\phi_\ell,
\end{equation}
we have $c_{\ell,1}>0$ for all $\ell$, because $v_{\mathrm{PF}}$ is strictly positive and the supports of the $\phi_\ell$ are disjoint depth layers.  Hence the largest eigenvalue of $M$ is the largest eigenvalue $\lambda^{\mathrm{sc}}_1$ of the scaling block $A^{\mathrm{sc}}$, and its eigenvector has the form stated in item 1.

\paragraph{Item 2: the leading eigenvector of each split block is sign-coherent.}
Assume now that $f$ is strictly decreasing.  We show first that every entry of every split block $A^{(h)}$ is strictly positive.  In \cref{eq:app_closed_split_form}, the negative contribution comes from opposite-side pairs, all at distance $r+s$.  The positive contributions come from same-side pairs.  The total coefficient of the same-side terms equals the total coefficient of the opposite-side term:
\begin{equation}
    \sum_{a=1}^{\min(r,s)-1}2^{r+s-a-1}+2^{\max(r,s)}
    =
    2^{r+s-1}.
\end{equation}
Every same-side distance appearing in the positive terms is strictly smaller than $r+s$.  Since $f$ is strictly decreasing, each same-side contribution is larger than the corresponding contribution at distance $r+s$.  Therefore
\begin{equation}
    A^{(h)}_{r,s}>0
    \qquad\text{for all }r,s=1,\ldots,h.
\end{equation}
Thus $A^{(h)}$ is a strictly positive symmetric matrix.  Perron--Frobenius implies that its leading eigenvalue is simple and that the corresponding eigenvector can be chosen with strictly positive coordinates in the basis $(\psi_{u,1},\ldots,\psi_{u,h})$.  This proves item 2.

\paragraph{Item 3: split blocks are nested and satisfy interlacing.}
The formula in \cref{eq:app_closed_split_form} contains no dependence on $h$ except through the allowed index range $1\le r,s\le h$.  Therefore
\begin{equation}
\label{eq:app_split_principal_submatrix}
    A^{(h)}=A^{(h+1)}_{1:h,\,1:h}.
\end{equation}
Thus $A^{(h)}$ is a leading principal submatrix of $A^{(h+1)}$.  Since these matrices are symmetric, Cauchy's interlacing theorem gives
\begin{equation}
    \lambda^{(h+1)}_1
    \ge
    \lambda^{(h)}_1
    \ge
    \lambda^{(h+1)}_2
    \ge
    \cdots
    \ge
    \lambda^{(h+1)}_h
    \ge
    \lambda^{(h)}_h
    \ge
    \lambda^{(h+1)}_{h+1}.
\end{equation}
This proves item 3.

\paragraph{Item 4: leading split eigenvalues are ordered coarse-to-fine.}
Taking the first inequality from item 3 for $h=1,2,\ldots,L-1$ gives
\begin{equation}
    \lambda^{(L)}_1
    \ge
    \lambda^{(L-1)}_1
    \ge
    \cdots
    \ge
    \lambda^{(1)}_1.
\end{equation}
A split block of height $h$ corresponds to a subtree of height $h$; larger $h$ means a coarser split higher in the hierarchy.  Hence the dominant split modes are ordered from coarser to finer subtrees, proving item 4.

\subsection{Exponential kernel used in the parametric model}
\label[app]{app:exp_kernel_proof}

The qualitative theory above only requires $f$ to be positive and decreasing.  The parametric model in \cref{sec:parametric_decay} uses
\begin{equation}
\label{eq:app_exp_kernel}
    f(d)=\alpha e^{-\beta d},
    \qquad \alpha,\beta>0.
\end{equation}
This subsection records the additional algebraic structure of that special case.

\begin{proposition}[Rank-one relation for the exponential kernel]
\label{prop:exp_rank_one}
Let $f$ be given by \cref{eq:app_exp_kernel}.  Define
\begin{equation}
    q_r:=2^{r/2}e^{-\beta r},
    \qquad
    \theta:=\frac{e^{2\beta}}{2},
    \qquad
    q_0:=1.
\end{equation}
For a height-$h$ subtree, the local scaling block and split block satisfy
\begin{equation}
\label{eq:app_exp_rank_one_relation}
    A^{\mathrm{sc},(h)}
    =
    \begin{pmatrix}
        0 & 0\\
        0 & A^{(h)}
    \end{pmatrix}
    +
    \alpha
    \begin{pmatrix}
        1\\ q
    \end{pmatrix}
    \begin{pmatrix}
        1\\ q
    \end{pmatrix}^{\!\top},
    \qquad
    q=(q_1,\ldots,q_h)^\top.
\end{equation}
Consequently, for the full tree,
\begin{equation}
\label{eq:app_rank_one_interlacing_global}
    \lambda^{\mathrm{sc}}_1
    \ge
    \lambda^{(L)}_1
    \ge
    \lambda^{\mathrm{sc}}_2
    \ge
    \lambda^{(L)}_2
    \ge
    \lambda^{\mathrm{sc}}_3
    \ge
    \cdots
    \ge
    \lambda^{(L)}_L
    \ge
    \lambda^{\mathrm{sc}}_{L+1}.
\end{equation}
\end{proposition}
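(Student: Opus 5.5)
The plan is to verify \cref{eq:app_exp_rank_one_relation} entrywise from the closed forms in \cref{lem:scaling_block_formula,lem:split_block_formula}, and then to read off \cref{eq:app_rank_one_interlacing_global} from the standard interlacing theorem for rank-one positive semidefinite updates. Throughout, I interpret $A^{\mathrm{sc},(h)}$ as the scaling block of the height-$h$ tree, i.e.\ \cref{eq:app_closed_scaling_form} with indices $0\le \ell,m\le h$, and index $A^{(h)}$ by $1,\dots,h$ so that it sits in the lower-right corner.

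\emph{Step 1 (entries with $\ell,m\ge1$).} Compare \cref{eq:app_closed_scaling_form} with \cref{eq:app_closed_split_form} at $r=\ell$, $s=m$. The sums over $a\ge1$ agree, and so do the ancestor terms $2^{\max}f(|\ell-m|)$. The only difference is the root-level term. The scaling block contains $+2^{\ell+m-1}f(\ell+m)$ (the $a=0$ term), whereas the split block contains $-2^{\ell+m-1}f(\ell+m)$. Hence
\[
A^{\mathrm{sc}}_{\ell,m}-A^{(h)}_{\ell,m}=2^{-(\ell+m)/2}\,2^{\ell+m}f(\ell+m)=2^{(\ell+m)/2}\alpha e^{-\beta(\ell+m)}=\alpha q_\ell q_m .
\]
This is the one place where the exponential form is used: $f(\ell+m)$ factorizes as a product of a function of $\ell$ and a function of $m$.

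\emph{Step 2 (border entries).} If $\ell=0$, the sum in \cref{eq:app_closed_scaling_form} is empty, so $A^{\mathrm{sc}}_{0,m}=2^{-m/2}2^mf(m)=\alpha q_m$. In particular $A^{\mathrm{sc}}_{0,0}=\alpha=\alpha q_0^2$. These entries match the first row and column of $\alpha(1,q)(1,q)^\top$, which completes the proof of \cref{eq:app_exp_rank_one_relation}.

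\emph{Step 3 (interlacing).} Set $h=L$ and write $B=\operatorname{diag}(0,A^{(L)})$ and $w=\sqrt{\alpha}\,(1,q)^\top$, so that $A^{\mathrm{sc}}=B+ww^\top$. Since this is a rank-one PSD update, Weyl's interlacing theorem gives
\[
\mu_1\le\lambda^{\mathrm{sc}}_1,\qquad \mu_i\le \lambda^{\mathrm{sc}}_i\le \mu_{i-1}\quad (i\ge2),
\]
where $\mu_1\ge\dots\ge\mu_{L+1}$ are the eigenvalues of $B$. The spectrum of $B$ is $\{0\}\cup\{\lambda^{(L)}_1,\dots,\lambda^{(L)}_L\}$.

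\emph{Main obstacle.} To obtain exactly \cref{eq:app_rank_one_interlacing_global}, the extra eigenvalue $0$ must sit at the bottom, i.e.\ $\lambda^{(L)}_L\ge0$. Then $\mu_i=\lambda^{(L)}_i$ for $i\le L$, $\mu_{L+1}=0$, and the Weyl chain becomes the claimed one (with the additional bonus $\lambda^{\mathrm{sc}}_{L+1}\ge0$).

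I would establish $\lambda^{(L)}_L\ge0$ by showing that $M$ is positive semidefinite, because $A^{(L)}$ is the matrix of the restriction of $M$ to $S^{\mathrm{root}}$ in an orthonormal basis. To prove this, I would invoke the classical fact that tree metrics are of negative type: a tree embeds isometrically into $\ell_1$, for instance via edge-indicator coordinates. Schoenberg's theorem then makes $e^{-\beta\,\mathrm{dist}}$ positive definite for every $\beta>0$, and multiplying by $\alpha>0$ preserves this.

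If I want to avoid this citation, the fallback is a direct argument. I would show that the quadratic form $x^\top A^{(h)}x$ is nonnegative by induction on $h$, using the nesting of \cref{thm:ordering} item 3 together with the same rank-one structure applied inside each child subtree.
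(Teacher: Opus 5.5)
Your proposal is correct and follows the paper's proof. You compare the blocks entrywise (you difference the general closed forms directly, isolating the $a=0$ root term against the opposite-side term, whereas the paper first rewrites both blocks in the compact $\theta$-form), and you then apply rank-one interlacing once $A^{(L)}$ is positive semidefinite, where your $\ell_1$-embedding plus Schoenberg argument justifies positive semidefiniteness of $e^{-\beta\,\mathrm{dist}}$, which the paper only asserts.
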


\begin{proof}
Substituting $f(d)=\alpha e^{-\beta d}$ into \cref{eq:app_closed_scaling_form,eq:app_closed_split_form} gives, with $m=\min(r,s)$,
\begin{equation}
\label{eq:app_exp_scaling_compact}
    A^{\mathrm{sc},(h)}_{r,s}
    =
    \alpha q_rq_s
    \left[
        \theta^m
        +
        \frac12\sum_{a=0}^{m-1}\theta^a
    \right],
    \qquad r,s=0,\ldots,h,
\end{equation}
and
\begin{equation}
\label{eq:app_exp_split_compact}
    A^{(h)}_{r,s}
    =
    \alpha q_rq_s
    \left[
        \theta^m
        +
        \frac12\sum_{a=1}^{m-1}\theta^a
        -
        \frac12
    \right],
    \qquad r,s=1,\ldots,h.
\end{equation}
Comparing the two displays yields
\begin{equation}
    A^{\mathrm{sc},(h)}_{0,0}=\alpha,
    \qquad
    A^{\mathrm{sc},(h)}_{0,r}=\alpha q_r,
    \qquad
    A^{\mathrm{sc},(h)}_{r,s}=A^{(h)}_{r,s}+\alpha q_rq_s
    \quad r,s\ge 1.
\end{equation}
This is exactly \cref{eq:app_exp_rank_one_relation}.

Taking \(h=L\), the full scaling block is a positive rank-one perturbation of \(0 \oplus A^{(L)}\). For the exponential kernel, \(A^{(L)}\) is positive semidefinite because \(e^{-\beta \mathrm{dist}(i,j)}\) is a positive-semidefinite tree kernel; hence the eigenvalues of \(0\oplus A^{(L)}\) are ordered as \(\lambda^{(L)}_1\ge\cdots\ge\lambda^{(L)}_L\ge 0\). Cauchy's interlacing theorem for a rank-one symmetric perturbation then gives \cref{eq:app_rank_one_interlacing_global}.
\end{proof}

\begin{corollary}[Placement of non-leading scaling modes]
\label{cor:scaling_mode_placement}
Under the exponential kernel, for $k=1,\ldots,L-1$,
\begin{equation}
    \lambda^{(L-k)}_1
    \ge
    \lambda^{(L)}_{k+1}
    \ge
    \lambda^{\mathrm{sc}}_{k+2}.
\end{equation}
In particular,
\begin{equation}
    \lambda^{(L-1)}_1\ge \lambda^{\mathrm{sc}}_3,
    \qquad
    \lambda^{(L-2)}_1\ge \lambda^{\mathrm{sc}}_4,
    \qquad\text{and so on.}
\end{equation}
\end{corollary}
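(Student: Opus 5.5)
The corollary follows by chaining two interlacing results already established. The first is the global rank-one interlacing \cref{eq:app_rank_one_interlacing_global} from \cref{prop:exp_rank_one}. The second is the Cauchy interlacing between nested split blocks from item 3 of \cref{thm:ordering}.

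\emph{Right-hand inequality.} I read $\lambda^{(L)}_{k+1}\ge\lambda^{\mathrm{sc}}_{k+2}$ directly off \cref{eq:app_rank_one_interlacing_global}. That chain alternates $\lambda^{\mathrm{sc}}_j\ge\lambda^{(L)}_j\ge\lambda^{\mathrm{sc}}_{j+1}$. Taking $j=k+1$, with $k+1\le L$, gives the claim. No further work is needed beyond checking that the index stays in range for $k\le L-1$.

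\emph{Left-hand inequality.} I show $\lambda^{(L-k)}_1\ge\lambda^{(L)}_{k+1}$ by iterating the interlacing of item 3. By item 3, $A^{(h)}$ is the leading $h\times h$ principal submatrix of $A^{(h+1)}$. Iterating, $A^{(L-k)}$ is the leading principal submatrix of $A^{(L)}$ obtained by deleting $k$ rows and columns.

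I would apply the general Cauchy interlacing theorem for an $(n-k)\times(n-k)$ principal submatrix $B$ of a symmetric $n\times n$ matrix $A$. It states $\lambda_i(A)\ge\lambda_i(B)\ge\lambda_{i+k}(A)$. With $i=1$, $n=L$ and $B=A^{(L-k)}$, this gives $\lambda^{(L-k)}_1\ge\lambda^{(L)}_{1+k}$.

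Equivalently, I can chain the one-step inequalities $\lambda^{(h)}_i\ge\lambda^{(h+1)}_{i+1}$ from item 3:
\[
\lambda^{(L-k)}_1\ge\lambda^{(L-k+1)}_2\ge\cdots\ge\lambda^{(L)}_{k+1}.
\]
This version uses only what item 3 states.

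\emph{Combining.} Concatenating the two inequalities gives the displayed chain, and the particular cases follow by setting $k=1,2,\dots$.

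The only point needing care is index bookkeeping. For $k=L-1$, the block $A^{(1)}$ is $1\times1$ and $\lambda^{(L)}_L$ is the last split eigenvalue, which is still covered by \cref{eq:app_rank_one_interlacing_global}. I also need to confirm that the exponential kernel assumption is used only through \cref{prop:exp_rank_one}. No sign or positivity information beyond that proposition is needed.
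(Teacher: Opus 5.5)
Your proposal is correct and matches the paper's proof: the right-hand inequality is read off the rank-one interlacing chain \cref{eq:app_rank_one_interlacing_global} at index $k+1\le L$, and the left-hand inequality comes from iterating the one-step split-block interlacing $\lambda^{(h)}_i\ge\lambda^{(h+1)}_{i+1}$ from item 3 of \cref{thm:ordering}. Your explicit chain $\lambda^{(L-k)}_1\ge\lambda^{(L-k+1)}_2\ge\cdots\ge\lambda^{(L)}_{k+1}$ and the index checks spell out what the paper's two-line proof leaves implicit.
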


\begin{proof}
The inequality $\lambda^{(L)}_{k+1}\ge \lambda^{\mathrm{sc}}_{k+2}$ follows from \cref{eq:app_rank_one_interlacing_global}.  The inequality $\lambda^{(L-k)}_1\ge \lambda^{(L)}_{k+1}$ follows by iterating the split-block interlacing inequalities from \cref{thm:ordering}.  Combining the two inequalities gives the result.
\end{proof}

\subsection{Perturbation statement used in the noise discussion}
\label[app]{app:perturbation_comment}

The main text uses the standard perturbative intuition that small empirical fluctuations lift degeneracies and rotate directions inside nearly degenerate eigenspaces, while preserving well-separated spectral subspaces.  The precise statement needed for our use of top-$k$ eigenspace alignment is the following standard consequence of the Davis--Kahan sin-$\Theta$ theorem.

\begin{proposition}[Stability of clustered eigenspaces]
\label{prop:davis_kahan}
Let $M_0$ be the ideal distance-kernel matrix and let $M=M_0+E$ be a symmetric perturbation.  Let $U$ span an eigenspace, or a cluster of eigenspaces, of $M_0$ whose eigenvalues are separated from the rest of the spectrum by a gap $\gamma>0$.  Let $\widehat U$ be the corresponding invariant subspace of $M$.  If $\|E\|_2<\gamma$, then
\begin{equation}
    \|\sin\Theta(U,\widehat U)\|_2
    \le
    \frac{\|E\|_2}{\gamma}.
\end{equation}
\end{proposition}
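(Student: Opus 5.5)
This follows from the Davis--Kahan sin-$\Theta$ theorem applied to the symmetric pair $(M_0,M)$; I would state it in its standard form and check that the hypotheses match.

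\textbf{Setting up the spectral decomposition.} Let $S$ be the set of eigenvalues of $M_0$ belonging to the chosen cluster, so that $U$ is the span of the corresponding eigenvectors. This is well defined by \cref{thm:eigenstructure}: the cluster may collect whole degenerate eigenspaces of the scaling and split blocks. Let $P_U$ be the orthogonal projector onto $U$, and write $\Lambda_{\mathrm{in}}$ and $\Lambda_{\mathrm{out}}$ for the spectra of $M_0$ inside and outside the cluster. The gap hypothesis reads
\[
\min_{\lambda\in\Lambda_{\mathrm{in}},\,\mu\in\Lambda_{\mathrm{out}}}|\lambda-\mu|\ge\gamma .
\]
For the top-$k$ alignment we want $S$ to be an interval at the top of the spectrum.

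\textbf{Identifying $\widehat U$.} By Weyl's inequality, each eigenvalue of $M$ lies within $\|E\|_2$ of the corresponding eigenvalue of $M_0$. Since $\|E\|_2<\gamma$, the $\dim U$ eigenvalues of $M$ that descend from the cluster are well defined: for a top cluster they are simply the top $\dim U$ eigenvalues of $M$. Let $\widehat U$ be their invariant subspace. The cleanest version of the argument compares $U$ with this $\widehat U$, measuring the gap between the cluster eigenvalues of $M_0$ and the complementary eigenvalues of $M$. To keep the stated constant $\gamma$ rather than $\gamma-\|E\|_2$, I would use the one-sided form of the theorem, in which the separation is between the spectrum of $M_0$ on $U$ and the spectrum of $M$ on $\widehat U^\perp$.

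\textbf{Core estimate.} Let $\widehat U_\perp$ be the complementary invariant subspace of $M$, and set $R:=M\widehat U_\perp$-side quantities as follows. Start from the Sylvester identity
\[
\widehat U_\perp^\top M_0 U-\widehat U_\perp^\top M U=-\widehat U_\perp^\top E U .
\]
Because both subspaces are invariant, this becomes $X\Lambda_{\mathrm{in}}-\widehat\Lambda_{\mathrm{out}}X=-\widehat U_\perp^\top E U$, where $X=\widehat U_\perp^\top U$. Since $\|X\|_2=\|\sin\Theta(U,\widehat U)\|_2$, the result follows from a lower bound of $\gamma$ on the separation of the Sylvester operator $X\mapsto X\Lambda_{\mathrm{in}}-\widehat\Lambda_{\mathrm{out}}X$, which gives $\|X\|_2\le\|E\|_2/\gamma$.

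\textbf{Main obstacle.} In the spectral norm, the separation bound for the Sylvester operator holds with constant $1$ only when the two spectra are separated by an interval: one spectrum lies in $[a,b]$ and the other outside $(a-\gamma,b+\gamma)$, the Davis--Kahan $\sin\Theta$ case. For arbitrary interleaved sets the constant degrades to $\pi/2$ (Bhatia--Davis--McIntosh). For top-$k$ clusters the spectra are one-sided, so the interval case applies. The real subtlety is the choice of which gap to use. With the gap measured purely on $M_0$, the honest statement is $\|\sin\Theta\|_2\le\|E\|_2/(\gamma-\|E\|_2)$, by combining Weyl's inequality with the one-sided theorem. Obtaining exactly $\|E\|_2/\gamma$ requires either the gap between $\Lambda_{\mathrm{in}}(M_0)$ and $\widehat\Lambda_{\mathrm{out}}(M)$, or the Yu--Wang--Samworth variant $2\|E\|_2/\gamma$. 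I would therefore first try to read $\gamma$ as the mixed gap. If the statement must use a gap of $M_0$ alone, I would adjust the constant and remark that in the regime $\|E\|_2\ll\gamma$ relevant to the noise discussion this changes nothing qualitatively.
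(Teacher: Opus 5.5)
Your route is the paper's: its proof is a one-line appeal to the Davis--Kahan $\sin\Theta$ theorem for the pair $(M_0,M_0+E)$. Your concern about which gap is meant is well founded, and it catches something the paper's proof passes over. The constant-$1$ bound needs the \emph{mixed} separation between the cluster eigenvalues of $M_0$ and the spectrum of $M$ on $\widehat U^\perp$. If $\gamma$ is a gap of $M_0$ alone, which is how the statement is worded, the inequality is false.

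Take $M_0$ with simple top eigenvalue; this holds for the ideal kernel by Perron--Frobenius. Let $U=\operatorname{span}(u_1)$, $\gamma=\lambda_1-\lambda_2$, and let $u_2$ be a unit eigenvector for $\lambda_2$. Put
\[
E=\frac{\gamma}{25}\,[u_1\ u_2]\begin{pmatrix}-8&6\\6&8\end{pmatrix}[u_1\ u_2]^{\top},
\qquad \|E\|_2=\tfrac{2}{5}\gamma<\gamma .
\]
On $\operatorname{span}(u_1,u_2)$ this gives $M=\lambda_2 I+\frac{\gamma}{25}\begin{pmatrix}17&6\\6&8\end{pmatrix}$. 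Its top eigenvalue $\lambda_2+\frac45\gamma$ exceeds every other eigenvalue of $M$, and its eigenvector is $(2u_1+u_2)/\sqrt5$. Hence
\[
\|\sin\Theta(U,\widehat U)\|_2=1/\sqrt5\approx0.447>0.4=\|E\|_2/\gamma .
\]

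The example respects your corrected bounds. The mixed gap is $\frac45\gamma$, which gives $\frac12$, and $\|E\|_2/(\gamma-\|E\|_2)=\frac23$. So your fallback is a necessary repair of the proposition, not an optional refinement: either read $\gamma$ as the mixed gap, or state $\|E\|_2/(\gamma-\|E\|_2)$ or $2\|E\|_2/\gamma$. The qualitative conclusion the paper draws for $\|E\|_2\ll\gamma$ is unchanged.

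Two small clean-ups. Delete the garbled sentence introducing $R$. Also, the Yu--Wang--Samworth variant gives the factor-$2$ bound, not ``exactly'' $\|E\|_2/\gamma$, so fix that sentence in your last paragraph.
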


\begin{proof}
This is the Davis--Kahan subspace perturbation bound applied to the symmetric pair $(M_0,M_0+E)$.  It implies that individual eigenvectors may rotate substantially within an exactly or nearly degenerate cluster, but the invariant subspace associated with the cluster is stable when the perturbation is small relative to the spectral gap.  This is why the empirical comparisons in \cref{sec:alignment_metric} use top-$k$ subspace alignment rather than individual eigenvector alignment.
\end{proof}

\section{Extension of theory to more general trees}
\label[app]{app:generalize_trees}
\subsection{Spherically symmetric trees}
\label[app]{app:spherically_symmetric_trees}

Let the tree have depth \(L\), and suppose that every node at depth \(d\) has \(b_d\) children, for \(d=0,\dots,L-1\). For a node \(u\) at depth \(d\), define
\begin{equation}
    N_{d,r}
    :=
    \prod_{q=d}^{d+r-1} b_q,
    \qquad
    N_{d,0}:=1,
\end{equation}
so that \(N_{d,r}=|T_r(u)|\) is the number of descendants of \(u\) at relative depth \(r\).

As in the binary case, define the scaling modes by
\begin{equation}
    \phi_\ell
    =
    \frac{1}{\sqrt{N_{0,\ell}}}
    \sum_{i\in T_\ell(0)} e_i,
    \qquad
    \ell=0,\dots,L,
\end{equation}
and set
\begin{equation}
    S_{\mathrm{sc}}
    :=
    \operatorname{span}\{\phi_0,\dots,\phi_L\}.
\end{equation}

For an internal node \(u\) at depth \(d\), let \(u_1,\dots,u_{b_d}\) denote its children, and choose an orthonormal basis
\begin{equation}
    c^{(1)},\dots,c^{(b_d-1)}
    \in
    \left\{
        c\in\mathbb R^{b_d}:\sum_{q=1}^{b_d} c_q=0
    \right\}.
\end{equation}
For \(a=1,\dots,b_d-1\) and \(r=1,\dots,L-d\), define
\begin{equation}
    \psi_{u,r,a}
    =
    \frac{1}{\sqrt{N_{d+1,r-1}}}
    \sum_{q=1}^{b_d} c^{(a)}_q
    \sum_{i\in T_{r-1}(u_q)} e_i .
\end{equation}
These modes contrast the child subtrees of \(u\) while remaining constant on relative-depth layers inside each child subtree. Define
\begin{equation}
    S^u
    :=
    \operatorname{span}
    \{
        \psi_{u,r,a}:
        r=1,\dots,L-d,\ a=1,\dots,b_d-1
    \}.
\end{equation}

\begin{theorem}[Hierarchy-adapted decomposition for spherically symmetric trees]
\label{thm:spherical_tree_decomposition}
The modes \(\{\phi_\ell\}\) and \(\{\psi_{u,r,a}\}\) form an orthonormal basis of the vertex space, giving
\begin{equation}
    \mathbb R^{|T|}
    =
    S_{\mathrm{sc}}
    \oplus
    \bigoplus_{u\in \mathrm{Int}(T)} S^u .
\end{equation}
Moreover, if
\begin{equation}
    M_{ij}=f(\mathrm{dist}(i,j)),
\end{equation}
then \(M\) preserves \(S_{\mathrm{sc}}\) and every \(S^u\). In the corresponding hierarchy-adapted basis,
\begin{equation}
    M
    \cong
    A^{\mathrm{sc}}
    \oplus
    \bigoplus_{d=0}^{L-1}
    \bigoplus_{u:\, |u|=d}
    \left(A^{[d]}\otimes I_{b_d-1}\right),
\end{equation}
where \(A^{[d]}\in\mathbb R^{(L-d)\times(L-d)}\) acts on the relative-depth coordinate and \(I_{b_d-1}\) acts on the child-contrast coordinate.
\end{theorem}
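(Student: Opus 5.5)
The plan follows the binary-case proofs of \cref{lem:haar_basis,lem:block_invariance}, with one new ingredient: tracking the child-contrast index $a$. This index is what produces the tensor factor $I_{b_d-1}$.

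\textbf{Orthonormality.} Unit norm is immediate, since $\sum_q (c^{(a)}_q)^2=1$ and each inner sum has $N_{d+1,r-1}$ disjoint terms. Inner products between modes are handled by the same case split as in \cref{lem:haar_basis}.
\begin{itemize}
\item Modes supported on different depth layers, or rooted in disjoint subtrees, have disjoint support.
\item A scaling mode against a wavelet on the same layer gives $\propto\sum_q c^{(a)}_q=0$.
\item If $u$ is a strict ancestor of $u'$, then $\psi_{u,\cdot,a}$ is constant on $T(u')$ while $\psi_{u',\cdot,a'}$ sums to zero there.
\item For the same root $u$ and the same $r$, one gets $\langle\psi_{u,r,a},\psi_{u,r,a'}\rangle=\sum_q c^{(a)}_q c^{(a')}_q=\delta_{aa'}$.
\end{itemize}
For completeness, count dimensions: $(L+1)+\sum_{d}N_{0,d}(b_d-1)(L-d)$ should equal $|T|=\sum_{\ell}N_{0,\ell}$. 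I would verify this by writing $N_{0,d}(b_d-1)=N_{0,d+1}-N_{0,d}$ and telescoping over $r$. Concretely, $\sum_{d=0}^{L-1}(N_{0,d+1}-N_{0,d})(L-d)=\sum_{\ell=1}^{L}N_{0,\ell}-L$, and adding $L+1$ gives $\sum_{\ell=0}^L N_{0,\ell}$.

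\textbf{Invariance.} For $S_{\mathrm{sc}}$ the argument is verbatim: spherical symmetry gives an automorphism group that acts transitively on each depth layer. For $S^u$, I would describe it intrinsically as the vectors that
\begin{itemize}
\item are supported on $T(u)\setminus\{u\}$,
\item on each child subtree $T(u_q)$ equal $y_q(\text{rel. depth})$ for some function $y_q$,
\item satisfy $\sum_q y_q\equiv 0$.
\end{itemize}
Take $x$ of this form.
\begin{itemize}
\item For $i\notin T(u)$, $\mathrm{dist}(i,j)$ depends only on the depth of $j$. Hence $(Mx)_i$ is a sum over depths of $\sum_q y_q=0$.
\item For $i\in T(u_p)$ at relative depth $s$, split $(Mx)_i$ into two parts. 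The contribution from $j\in T(u_q)$ with $q\ne p$ is $\sum_r g(s,r)\,y_q(r)$, where $g$ is independent of $q$ by the automorphisms permuting children. Since $\sum_{q\ne p}y_q=-y_p$, this part equals $-\sum_r g(s,r)y_p(r)$. The contribution from $T(u_p)$ is $\sum_r h_d(s,r)y_p(r)$, again independent of the particular $i$ by automorphisms inside $T(u_p)$.
\item For $i=u$, the distances to children layers are equal, so $(Mx)_u=0$.
\end{itemize}
So $(Mx)$ on $T(u_p)$ equals $K y_p$, where $K:=h_d-g$ is an $(L-d)\times(L-d)$ kernel. The sum constraint is preserved, and therefore $Mx\in S^u$.

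\textbf{Block form.} The computation above shows that in coordinates $x=\sum_{r,a}\xi_{r,a}\psi_{u,r,a}$, the operator acts as $K$ (suitably normalized) on $r$ and trivially on $a$. That is, $M|_{S^u}\cong A^{[d]}\otimes I_{b_d-1}$, where $A^{[d]}_{r,s}=\psi_{u,r,a}^\top M\psi_{u,s,a}$. This matrix depends only on $d$, again by spherical symmetry. Off-diagonal entries in $a$ vanish because they are proportional to $\sum_q c^{(a)}_q c^{(a')}_q=0$ once the per-child kernel is the same for all children.

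\textbf{Main obstacle.} I expect the main difficulty to be making the "independent of $q$" and "independent of $i$" automorphism claims rigorous. The clean way is to use that in a spherically symmetric tree the automorphism group fixing $u$ acts as the full symmetric group on the children of $u$ and transitively on each layer of each $T(u_q)$. Everything else is bookkeeping.
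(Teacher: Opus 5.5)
Your proposal is correct. Its skeleton is the same as the paper's: the orthogonality case split, the mode count, the automorphism argument for $S_{\mathrm{sc}}$, and cancellation outside $T(u)$. The difference is in how you obtain the factor $I_{b_d-1}$.

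The paper argues abstractly. Sibling permutations at $u$ commute with $M$ and act on the contrast index of $S^u$ as the standard representation of the symmetric group. Schur's lemma then forces $M|_{S^u}$ to be the identity on that factor.

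You instead compute directly in the child-profile coordinates $y_q$. Using $\sum_{q\neq p}y_q=-y_p$, you show that $M$ applies one and the same kernel $K=h_d-g$ to every profile $y_p$. This proves invariance of $S^u$ and the tensor structure in a single step, and you check that cross-terms in $a$ reduce to $\sum_q c^{(a)}_q c^{(a')}_q=\delta_{aa'}$.

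What your route buys:
\begin{itemize}
\item It needs no representation theory, in particular no appeal to irreducibility of the standard representation over $\mathbb R$.
\item It makes explicit the paper's one-line claim that the image ``lies in the child-contrast subspace.''
\item It yields the block itself. Conjugating $K$ by $\mathrm{diag}(\sqrt{N_{d+1,s-1}})$ gives $A^{[d]}$ as a same-child kernel minus a cross-child term $\sqrt{N_{d+1,r-1}N_{d+1,s-1}}\,f(r+s)$. This reproduces the explicit $b$-ary block entries the paper records separately.
\item Your telescoping check of the dimension count fills in a step the paper only asserts.
\end{itemize}

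The Schur argument is shorter. It also shows that the identity factor is forced by symmetry alone, whatever the form of the kernel.

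Two small points should be stated when you write it out. First, $h_d$ is the same for every child $p$, because the child subtrees are isomorphic. Second, every $T(u_q)$ has $N_{d+1,r-1}$ nodes at each layer, which the cancellation at $i\notin T(u)$ and at $i=u$ uses. Both follow immediately from spherical symmetry.
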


\begin{proof}
The proof follows the same orthogonality and cancellation argument as in the binary case. Scaling modes at different depths have disjoint support. Each split mode has zero sum across the children of its root, so it is orthogonal to every scaling mode. Two split modes are orthogonal if their supports are disjoint or lie on different depth layers. If one split root is an ancestor of the other, then the ancestor split mode is constant on the descendant subtree while the descendant split mode has zero child-sum, so their inner product vanishes. Counting the modes gives
\begin{equation}
    (L+1)
    +
    \sum_{d=0}^{L-1}
    \left(\prod_{q=0}^{d-1} b_q\right)(L-d)(b_d-1)
    =
    |T|,
\end{equation}
so the orthonormal collection is a basis.

Now let \(M_{ij}=f(\mathrm{dist}(i,j))\). Since distance is invariant under every rooted-tree automorphism, and in particular under permutations of sibling subtrees at any internal node, \(M\) sends depth-layer-constant vectors to depth-layer-constant vectors. Hence \(S_{\mathrm{sc}}\) is invariant.

For a split space \(S^u\), contributions from the child subtrees of \(u\) cancel outside \(T(u)\), because the child-contrast coefficients sum to zero and paired nodes have equal distance to any point outside \(T(u)\). Inside \(T(u)\), symmetry implies that the result is still supported on \(T(u)\), lies in the child-contrast subspace at \(u\), and can only mix the relative-depth coordinate and the contrast coordinate. Thus \(S^u\) is invariant.

Finally, for fixed depth \(d=|u|\), all nodes \(u\) have isomorphic descendant trees. The action of permuting the \(b_d\) children of \(u\) is the standard representation on the contrast index \(a\). Since \(M\) is invariant under these permutations, Schur's lemma implies that the restriction of \(M\) to \(S^u\) acts as the identity on the child-contrast factor and only mixes the relative-depth coordinate. Therefore the block has the form \(A^{[d]}\otimes I_{b_d-1}\).
\end{proof}

\paragraph{Irregular trees}
The exact invariant decomposition above uses spherical symmetry. For a fully irregular rooted tree, local Haar-like contrasts can still be defined, but a distance kernel \(M_{ij}=f(\mathrm{dist}(i,j))\) need not preserve each local split space exactly, because sibling subtrees may have different sizes or shapes. Thus irregular WordNet-induced trees should be understood as empirical perturbations of the symmetric idealization.

\subsection{\(s\)-ary trees}
\label[app]{app:sary_trees}

The full \(s\)-ary tree is the special case \(b_0=\cdots=b_{L-1}=s\). In this setting the depth-dependent blocks \(A^{[d]}\) depend only on the remaining subtree height \(h=L-d\), so we write them as
\(A^{(h)}\).

\begin{corollary}[Fixed \(s\)-ary trees]
\label{cor:sary_tree_extension}
Suppose \(b_0=\cdots=b_{L-1}=s\). Then
\begin{equation}
    M
    \cong
    A^{\mathrm{sc}}
    \oplus
    \bigoplus_{u\in \mathrm{Int}(T)}
    \left(A^{(h(u))}\otimes I_{s-1}\right).
\end{equation}
Moreover,
\begin{equation}
    A^{(h)}
    =
    A^{(h+1)}_{1:h,\,1:h}.
\end{equation}
Therefore Cauchy interlacing gives
\begin{equation}
    \lambda^{(h+1)}_1
    \ge
    \lambda^{(h)}_1
    \ge
    \lambda^{(h+1)}_2
    \ge
    \cdots
    \ge
    \lambda^{(h+1)}_h
    \ge
    \lambda^{(h)}_h
    \ge
    \lambda^{(h+1)}_{h+1}.
\end{equation}
In particular,
\begin{equation}
    \lambda^{(L)}_1
    \ge
    \lambda^{(L-1)}_1
    \ge
    \cdots
    \ge
    \lambda^{(1)}_1.
\end{equation}
Thus the dominant split modes are ordered from coarser to finer subtrees for every fixed branching factor \(s\ge 2\). The binary case in the main text is the special case \(s=2\).
\end{corollary}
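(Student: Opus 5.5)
The plan is to specialize \cref{thm:spherical_tree_decomposition} to constant branching $b_d=s$ and then repeat the nesting argument from item 3 of \cref{thm:ordering}. The decomposition itself is immediate from \cref{thm:spherical_tree_decomposition}. When all $b_d=s$, the only thing to check is that the relative-depth block $A^{[d]}$ at depth $d$ depends on $d$ only through the height $h=L-d$. This holds because the descendant tree of any node at depth $d$ is isomorphic to the full $s$-ary tree of height $L-d$. Moreover, the restriction of $M$ to $S^u$ involves only distances among nodes of $T(u)$, since $S^u$ is supported on $T(u)$ and invariant. Hence I will write $A^{[d]}=A^{(h(u))}$, and the block structure $A^{(h(u))}\otimes I_{s-1}$ follows.

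The substantive step is the nesting $A^{(h)}=A^{(h+1)}_{1:h,1:h}$. I will compute the entries explicitly, in the style of \cref{lem:split_block_formula}. By the Schur-lemma structure, $A^{(h)}_{r,s'}=\psi_{u,r,a}^\top M\psi_{u,s',a}$ for any fixed contrast index $a$. Write $c=c^{(a)}$, with $\sum_q c_q=0$ and $\sum_q c_q^2=1$.

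Split ordered pairs $(i,j)$, with $i\in T_{r-1}(u_q)$ and $j\in T_{s'-1}(u_{q'})$, into two groups:
\begin{itemize}
\item Cross pairs ($q\neq q'$). These have distance $r+s'$, and there are $s^{r-1}s^{s'-1}$ of them for each ordered pair $(q,q')$. Since $\sum_{q\neq q'}c_qc_{q'}=-\sum_q c_q^2=-1$, they contribute $-s^{r+s'-2}f(r+s')$.
\item Same-child pairs ($q=q'$). Each carries weight $c_q^2$, so their contribution is a sum over these pairs that does not depend on $q$. I group them by the relative depth $a'\geq 1$ of the lowest common ancestor, exactly as in \cref{lem:split_block_formula}. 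This gives terms $s^{r+s'-a'-2}(s-1)f(r+s'-2a')$ for $1\le a'\le\min(r,s')-1$, plus an ancestor term $s^{\max(r,s')-1}f(|r-s'|)$.
\end{itemize}
After multiplying by the normalization $s^{-(r-1)/2}s^{-(s'-1)/2}$, the resulting formula involves $h$ only through the admissible range of $r,s'$. This yields the nesting.

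With the nesting in hand, Cauchy interlacing for a symmetric matrix and its leading principal submatrix gives the stated chain of inequalities. Taking the first inequality in the chain for $h=1,\dots,L-1$ gives the monotone ordering of the leading split eigenvalues.

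The main obstacle is the counting in the same-child pairs: getting the $(s-1)$ factors and the ancestor-pair counts right. For the nesting, however, exact constants are not needed. It suffices to observe that every count and distance depends only on $r$, $s'$ and $s$, never on $h$. The real check is that the normalization constants $N_{d+1,r-1}=s^{r-1}$ are themselves independent of $d$, which holds in the $s$-ary case. That is precisely what makes the blocks nest.
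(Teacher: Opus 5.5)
Your proposal is correct and follows essentially the same route as the paper. You specialize \cref{thm:spherical_tree_decomposition}, use the isomorphism of all height-$h$ subtrees to index the split blocks by $h$, observe that the entries depend on $h$ only through the admissible range of relative depths (so $A^{(h)}$ is a leading principal submatrix of $A^{(h+1)}$), and finish with Cauchy interlacing; your explicit entry computation, with cross-child pairs contributing $-s^{r+s'-2}f(r+s')$ and same-child pairs grouped by lowest-common-ancestor depth, matches the $b$-ary block formula the paper records right after the corollary, with $b=s$.
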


\begin{proof}
When \(b_0=\cdots=b_{L-1}=s\), every height-\(h\) subtree is isomorphic to every other height-\(h\) subtree. Hence the split block depends only on \(h\). The height-\(h\) block is obtained from the height-\((h+1)\) block by restricting the relative-depth indices to \(1,\dots,h\), so it is a leading principal submatrix. The interlacing and monotonicity statements then follow from Cauchy's
interlacing theorem.
\end{proof}

\paragraph{Explicit block entries.}

We now record the explicit block entries for the fixed-branching case. To avoid overloading notation, let \(b\) denote the fixed branching factor of the full \(b\)-ary tree. For a node \(u\), recall that
\begin{equation}
    T_r(u)
    :=
    \{i\in T(u): i \text{ is at relative depth } r \text{ below } u\},
\end{equation}
so that \(|T_r(u)|=b^r\).

The local scaling block for a height-\(h\) subtree has entries
\begin{equation}
    A^{\mathrm{sc},(h)}_{r,t}
    =
    \frac{1}{\sqrt{b^r b^t}}
    \sum_{i\in T_r(u)}
    \sum_{j\in T_t(u)}
    f(\mathrm{dist}(i,j)),
    \qquad r,t=0,\dots,h.
\end{equation}
Grouping pairs by the relative depth \(\ell\) of their lowest common ancestor gives
\begin{equation}
    A^{\mathrm{sc},(h)}_{r,t}
    =
    b^{-(r+t)/2}
    \left[
        \sum_{\ell=0}^{\min(r,t)-1}
        (b-1)b^{r+t-\ell-1}
        f(r+t-2\ell)
        +
        b^{\max(r,t)}f(|r-t|)
    \right],
\end{equation}
where the empty sum is interpreted as zero. The first term counts pairs whose lowest common ancestor is strictly above both nodes; the final term counts pairs for which one node is ancestral to the other, including \(i=j\).

For split modes, fix an internal node \(u\). Let \(u_1,\dots,u_b\) be its children, and choose an orthonormal basis
\begin{equation}
    c^{(1)},\dots,c^{(b-1)}
    \in
    \left\{c\in\mathbb R^b:\sum_{q=1}^b c_q=0\right\}.
\end{equation}
For contrast direction \(a=1,\dots,b-1\), define
\begin{equation}
    \psi_{u,r,a}
    =
    \frac{1}{\sqrt{b^{r-1}}}
    \sum_{q=1}^b c^{(a)}_q
    \sum_{i\in T_{r-1}(u_q)} e_i,
    \qquad r=1,\dots,h(u).
\end{equation}
The split space at \(u\) is
\begin{equation}
    S^u
    =
    \operatorname{span}\{\psi_{u,r,a}: r=1,\dots,h(u),\ a=1,\dots,b-1\},
\end{equation}
so \(\dim S^u=h(u)(b-1)\).

Because \(M_{ij}=f(\mathrm{dist}(i,j))\) is invariant under permutations of the \(b\) child subtrees, Schur's lemma implies that \(M\) acts as the identity on the contrast index \(a\). Thus the full split block on \(S^u\) has the form
\begin{equation}
    A^{(h)}\otimes I_{b-1},
\end{equation}
up to the ordering of the basis. Equivalently,
\begin{equation}
    \psi_{u,r,a}^{\top}M\psi_{u,t,a'}
    =
    A^{(h)}_{r,t}\delta_{a,a'}.
\end{equation}
The depth block \(A^{(h)}\in\mathbb R^{h\times h}\) has entries
\begin{equation}
    A^{(h)}_{r,t}
    =
    \psi_{u,r,a}^{\top}M\psi_{u,t,a},
    \qquad r,t=1,\dots,h,
\end{equation}
for any fixed contrast direction \(a\). This quantity is independent of \(a\).

Grouping pairs by the relative depth \(\ell\) of their lowest common ancestor gives
\begin{equation}
\begin{split}
    A^{(h)}_{r,t} 
    &= b^{-(r+t-2)/2} \Bigg[ -b^{r+t-2}f(r+t) \\
    &\qquad + \sum_{\ell=1}^{\min(r,t)-1} (b-1)b^{r+t-\ell-2} f(r+t-2\ell) 
    + b^{\max(r,t)-1}f(|r-t|) \Bigg].
\end{split}
\end{equation}
The negative term comes from pairs lying in different child subtrees of \(u\); the positive terms come from pairs lying in the same child subtree. For \(b=2\), \(\dim S^u=h(u)\) and this reduces to the
binary split block used in the main text.

\paragraph{Exponential kernel.}

For the exponential kernel
\begin{equation}
    f(d)=\alpha e^{-\beta d},
    \qquad \alpha,\beta>0,
\end{equation}
the \(s\)-ary block formulas take a particularly compact form. Let
\begin{equation}
    x:=e^{-\beta},
    \qquad
    q_r:=s^{r/2}x^r,
    \qquad
    \theta:=\frac{1}{sx^2}.
\end{equation}
Then, for \(m=\min(r,t)\),
\begin{equation}
    A^{\mathrm{sc},(h)}_{r,t}
    =
    \alpha q_rq_t
    \left[
        \theta^m
        +
        \frac{s-1}{s}
        \sum_{a=0}^{m-1}\theta^a
    \right],
    \qquad r,t=0,\dots,h,
\end{equation}
and
\begin{equation}
    A^{(h)}_{r,t}
    =
    \alpha q_rq_t
    \left[
        \theta^m
        +
        \frac{s-1}{s}
        \sum_{a=1}^{m-1}\theta^a
        -
        \frac{1}{s}
    \right],
    \qquad r,t=1,\dots,h.
\end{equation}
For \(s=2\), these reduce to the binary formulas above.

Most importantly, the same rank-one relation survives for every \(s\). Comparing the two displays,
we obtain
\begin{equation}
    A^{\mathrm{sc},(h)}_{0,0}=\alpha,
    \qquad
    A^{\mathrm{sc},(h)}_{0,r}=\alpha q_r,
    \qquad
    A^{\mathrm{sc},(h)}_{r,t}
    =
    A^{(h)}_{r,t}+\alpha q_rq_t
    \quad r,t\ge 1.
\end{equation}
Equivalently,
\begin{equation}
    A^{\mathrm{sc},(h)}
    =
    \begin{pmatrix}
        0 & 0\\
        0 & A^{(h)}
    \end{pmatrix}
    +
    \alpha
    \begin{pmatrix}
        1\\ q
    \end{pmatrix}
    \begin{pmatrix}
        1\\ q
    \end{pmatrix}^{\!\top},
    \qquad
    q=(q_1,\dots,q_h)^\top .
\end{equation}

Thus the same spectral consequences hold. The local scaling block is a positive rank-one perturbation of \(0\oplus A^{(h)}\), so its eigenvalues interlace with those of \(A^{(h)}\). Since the exponential tree kernel \(e^{-\beta \mathrm{dist}(i,j)}\) is positive semidefinite, the split block \(A^{(h)}\) is also positive semidefinite, and the eigenvalues of \(0\oplus A^{(h)}\) are ordered with the additional zero eigenvalue at the end. Combined with the principal-submatrix relation above, this again implies that dominant split modes are ordered from coarser to finer subtrees, while the non-leading scaling modes are interleaved with the split spectrum. In particular, the qualitative hierarchy-aligned spectral picture derived for binary trees is not tied to binary branching; it is a consequence of distance-dependent co-occurrence on a symmetric rooted tree.

\section{Experimental details}
\label[app]{app:experimental_details}

This appendix gives implementation details for the main-text experiments. We first describe the construction of each main-text figure, then describe the shared preprocessing, WordNet hierarchy, co-occurrence, word2vec, and Gemma-unembedding pipelines. The treatment of indefinite \(M^\star\) and its positive spectral component is deferred to \cref{app:m_star_generalize}.

\subsection{Main-text figure construction}
\label[app]{app:main_figures}

\subsubsection{\cref{fig:figure1}: illustrative organism taxonomy}
\label[app]{app:fig1_details}

\cref{fig:figure1} is illustrative and is not used as a separate quantitative result. The displayed taxonomy is chosen to make the predicted splitting geometry visually interpretable and comparable to the organism example of Park et al.~\cite{park2025geometrycategoricalhierarchicalconcepts}. We fix the top-level structure to \emph{organism} branching into \emph{plant} and \emph{animal}; among valid descendant trees, we select a tree using simple frequency and WordNet-structure heuristics, described in \cref{app:organism_taxonomy}. The quantitative evidence comes from the sampled binary-subtree experiments in \cref{fig:micro_evidence}.

Panel (b) plots the mean normalized co-occurrence statistic \(M^\star_{ij}\) as a function of edge distance within this taxonomy. Panel (c) diagonalizes the theoretical Gram matrix obtained from a distance kernel evaluated on the full binary-tree distance matrix.

\subsubsection{\cref{fig:global_decay}: global co-occurrence decay}
\label[app]{app:fig2_details}

\cref{fig:global_decay} estimates the relationship between semantic distance and normalized co-occurrence. We compute \(M^\star\) from the corpus co-occurrence probabilities described in \cref{app:cooccurrence_statistics}. For each distance \(d=0,\dots,6\), we estimate the mean of \(M^\star_{ij}\) by sampling \(5000\) eligible synset pairs at that distance.

Pairs are sampled by first drawing an eligible synset \(u\), then using breadth-first search to count and sample an eligible synset \(v\) from the distance-\(d\) frontier of \(u\). For \(d=0\), we take \(v=u\). For \(d>0\), samples with empty eligible frontiers are skipped. To correct for the fact that the procedure samples a start node and then a frontier element, each sampled pair is weighted by the size of the eligible distance-\(d\) frontier from which \(v\) was drawn. Means and standard errors are computed from these weighted samples, using the Kish effective sample size \citep{kish1965survey}.

The left panel uses minimum undirected distance in the original WordNet noun hypernym DAG, restricted to eligible synsets. The right panel uses undirected edge distance in the contracted WordNet hierarchy described in \cref{app:wordnet_hierarchy}. Error bars indicate one standard error of the weighted binned mean.

We fit the exponential kernel
\[
    f(d)=\alpha e^{-\beta d}
\]
by weighted log-linear regression on the binned means. Specifically, for bins with positive finite mean \(\bar M_d\), we fit
\[
    \log \bar M_d = \log \alpha - \beta d
\]
using weights proportional to the inverse squared standard error of \(\bar M_d\). The contracted-hierarchy fit is used as the default theoretical kernel for the main empirical comparisons.

\subsubsection{\cref{fig:organism_eigvec}: organism example in theory, word2vec, and Gemma}
\label[app]{app:fig3_details}

\cref{fig:organism_eigvec} compares the theoretical, word2vec, and Gemma Gram matrices on the same illustrative organism taxonomy used in \cref{fig:figure1}. The theoretical Gram matrix is obtained by evaluating the fitted distance kernel on tree distances. The word2vec Gram matrix is \(WW^\top\), where \(W\) is constructed from the top positive eigenmodes of the restricted \(M^\star\) matrix according to \cref{eq:w2v_form}. The Gemma Gram matrix is computed from centered and whitened Gemma unembedding vectors restricted to the same tokens.

For the eigenvector visualizations, node colors represent the signed value of the corresponding eigenvector coordinate. The color scale runs from blue for negative values through white for values near zero to red for positive values. Since degenerate eigenspaces have no canonical basis, we display one representative direction from each degenerate eigenspace; such directions may appear as mixtures of same-level splits.

\subsubsection{\cref{fig:micro_evidence}: top-\(k\) eigenspace alignment}
\label[app]{app:fig4_details}

\cref{fig:micro_evidence} is the main quantitative experiment. We sample perfect binary subtrees of depth \(L=3\) from the contracted WordNet hierarchy. For each sampled tree, we compute a theoretical Gram matrix from the fitted distance kernel and empirical Gram matrices from word2vec and Gemma unembeddings.

For each empirical Gram matrix \(G_{\mathrm{emp}}\) and theoretical Gram matrix \(G_{\mathrm{th}}\), we compute
\[
    g(k)=\frac{1}{k}\|U_k^\top V_k\|_F^2,
\]
where \(U_k\) and \(V_k\) contain the top \(k\) eigenvectors of \(G_{\mathrm{emp}}\) and \(G_{\mathrm{th}}\), respectively. Eigenvectors are ordered by descending eigenvalue. The ambient dimension is \(15\), since a depth-\(3\) full binary tree has \(2^{4}-1=15\) nodes.

The shuffled-label baseline globally permutes the mapping from synsets to empirical vectors while keeping the theoretical tree fixed. This preserves the global distribution and spectrum of empirical vectors but destroys the semantic correspondence between positions in the hierarchy and representation vectors. For the root-level summary, we compute
\[
    \mathrm{AlignmentArea}
    =
    \sum_{k=1}^{15}
    \left(
    \frac{1}{k}\|U_k^\top V_k\|_F^2-\frac{k}{15}
    \right),
\]
where \(k/15\) is the expected overlap with a random \(k\)-dimensional subspace in a \(15\)-dimensional ambient space.

In addition to the global shuffled-label baseline, we also compute a within-tree shuffle baseline. 
For each sampled tree, let \(x_1,\ldots,x_n\) denote the empirical vectors assigned to the \(n=15\) tree positions in breadth-first order. 
The within-tree shuffle samples a random permutation \(\pi\) of \(\{1,\ldots,n\}\) and forms the permuted empirical Gram matrix
\[
    G^{\mathrm{within}}_{ij}
    =
    x_{\pi(i)}^\top x_{\pi(j)} .
\]
This baseline preserves the sampled tree, the multiset of empirical vectors appearing in that tree, the empirical Gram spectrum, and all pairwise similarities among those vectors, but destroys the assignment between particular semantic nodes and particular tree positions. 
It is therefore a more local control than the global shuffled-label baseline: the global shuffle tests whether the vocabulary-level semantic correspondence matters, while the within-tree shuffle tests whether the ordering of the vectors within each sampled hierarchy matters beyond the unordered vector content of that tree.

For every root and every representation setting, we sample up to \(5000\) valid binary subtrees, using all valid subtrees when fewer than \(5000\) are available. For each sampled tree, we compute one shuffled-label baseline by permuting the mapping from synsets to empirical vectors while keeping the theoretical tree fixed. Thus each root contributes up to \(5000\) original tree-level alignment curves and up to \(5000\) shuffled tree-level alignment curves.

\subsubsection{\cref{fig:Parkconsistency}: Park et al. concept-vector diagnostic}
\label[app]{app:fig5_details}

\cref{fig:Parkconsistency} applies the parent--child innovation diagnostic of Park et al.~\cite{park2025geometrycategoricalhierarchicalconcepts} to both Gemma unembeddings and theoretical co-occurrence embeddings. For each WordNet concept \(w\), Park et al. estimate a concept vector \(\bar\ell_w\) intended to represent membership in \(w\) or one of its descendants. For an immediate parent--child pair \((p,w)\), the diagnostic plots
\[
    \cos(\bar\ell_w-\bar\ell_p,\bar\ell_p).
\]
Under the parent-plus-orthogonal-innovation prediction, this quantity should concentrate near zero.

Following Park et al., concept vectors are estimated using a training subset consisting of \(70\%\) of descendant tokens independently selected for each synset. We apply the same estimator to Gemma unembeddings and to theoretical embeddings constructed from the fitted co-occurrence kernel. The shuffled-parent baseline replaces each true parent with a randomly selected eligible concept, breaking the WordNet parent--child relation while preserving the collection of concept vectors.

\subsection{Vocabulary preprocessing}
\label[app]{app:shared_preprocessing}

\subsubsection{Token matching}
\label[app]{app:token_matching}

We begin with WordNet noun lemmas that have co-occurrence statistics. We enforce token hygiene by lowercasing lemma strings, excluding digits, and excluding multiword or hyphenated lemmas unless otherwise stated. We then require that each lemma match a single Gemma tokenizer entry after removing tokenizer boundary markers, stripping whitespace, and lowercasing.

When multiple tokenizer entries match the same cleaned lemma, we prefer the exact leading-whitespace token, then other leading-boundary tokens, then shorter deterministic matches; lowercased surfaces are preferred to all-caps variants. After intersecting WordNet noun lemmas with the co-occurrence vocabulary and Gemma single-token vocabulary, \(17{,}566\) lemmas remain.

\subsubsection{Polysemy filtering}
\label[app]{app:polysemy_filter}

To obtain an unambiguous synset--lemma assignment, we apply a polysemy filter. For a lemma \(\ell\) and synset \(s\), define
\begin{equation}
    \mathrm{monosemy}(\ell,s)
    =
    \frac{\mathrm{count}(\ell,s)}
    {\sum_{s'}\mathrm{count}(\ell,s')},
\end{equation}

where counts are WordNet lemma counts summed across all parts of speech. If the total count is zero or unavailable, the score is set to zero. For each synset, we choose the eligible lemma with highest monosemy score, breaking ties by English Zipf frequency from \texttt{wordfreq}. Synsets for which no eligible lemma is uniquely preferred are removed. This produces \(11{,}735\) unique synset--lemma pairs.

\subsection{WordNet hierarchy construction}
\label[app]{app:wordnet_hierarchy}

WordNet's noun hypernym graph is a DAG because some synsets have multiple hypernyms. We convert it into a rooted arborescence before sampling binary subtrees. Starting from the global WordNet noun root, we assign each synset a single parent by choosing a hypernym parent that maximizes depth from the root. Ties are broken deterministically by sorting candidate parent synset names lexicographically and taking the first candidate.

We then restrict this arborescence to the eligible synsets from \cref{app:polysemy_filter}. Ineligible intermediate nodes are contracted out: the parent of an eligible synset becomes its nearest eligible ancestor in the rooted arborescence. The resulting contracted arborescence is the hierarchy used for the right panel of \cref{fig:global_decay}, the fitted theoretical kernel, and the binary-subtree experiments. Unless otherwise stated, semantic distance means undirected edge distance in this contracted hierarchy.

\subsection{Binary subtree sampling}
\label[app]{app:binary_subtree_sampling}

A valid depth-\(L\) binary subtree rooted at \(u\) is defined recursively. For \(L=0\), the only valid tree is \(u\). For \(L>0\), a valid tree consists of \(u\) and two distinct valid child subtrees of depth \(L-1\). We use \(L=3\) in the main quantitative experiments.

To sample uniformly over valid tree structures, we compute dynamic-programming counts \(N(u,L)\), the number of valid depth-\(L\) binary subtrees rooted at \(u\). At each internal node, child pairs are selected with probability proportional to the number of valid completions below them. This samples uniformly over admissible binary tree structures rather than over individual word sets.
\subsection{Co-occurrence statistics}
\label[app]{app:cooccurrence_statistics}

We construct co-occurrence statistics from the November 2023 English Wikipedia dump accessed through \url{https://huggingface.co/datasets/wikimedia/wikipedia}. The corpus is lowercased and tokenized by extracting
contiguous alphabetic spans using the regular expression \texttt{[a-z]+}. We discard articles
with fewer than \(500\) retained tokens.

The co-occurrence vocabulary consists of observed clean WordNet noun lemmas. Concretely, we
enumerate English WordNet noun lemmas, retain only lemmas matching \texttt{[a-z]+}, and
intersect this set with tokens observed in the preprocessed Wikipedia corpus. We then write the
filtered corpus as a memory-mapped token stream, removing tokens outside this vocabulary and
storing explicit article-boundary offsets so that co-occurrence pairs are never counted across
articles.

Given corpus tokens \(C[\nu]\), we estimate a symmetric weighted skip-gram co-occurrence
distribution with context window \(L=16\). For a token pair separated by distance \(d\), we use
weight \(r(d)=L/d\). The unnormalized co-occurrence count is
\begin{equation}
    \#(i,j)
=
\sum_{\nu}
\sum_{d=1}^{L}
\frac{L}{d}
\left[
\mathbf{1}\{C[\nu]=i,C[\nu+d]=j\}
+
\mathbf{1}\{C[\nu]=j,C[\nu+d]=i\}
\right],
\end{equation}

where the sums are restricted to positions within the same article. We normalize by the total
co-occurrence mass,
\begin{equation}
    P_{ij}
    =
    \frac{\#(i,j)}{\sum_{i',j'}\#(i',j')}.
\end{equation}
The unigram probabilities used in \(M^\star\) are the marginals of this co-occurrence
distribution,
\[
    P_i=\sum_j P_{ij}.
\]
Finally, we form the normalized co-occurrence statistic as in \cref{eq:mstar}.
All matrices are stored sparsely on the observed co-occurrence support. The relationship between
this possibly indefinite matrix and the positive semidefinite Gram matrix realized by Euclidean
embeddings is discussed in \cref{app:m_star_generalize}.

\subsection{word2vec-style embeddings}
\label[app]{app:w2v_construction}

We construct word2vec-style embeddings directly from the normalized co-occurrence
matrix. Let
\[
    M^\star_S = U\Lambda U^\top
\]
be the eigendecomposition of \(M^\star\) restricted to the eligible synset--lemma set after the
token-matching and polysemy filters of \cref{app:token_matching,app:polysemy_filter}. Since a
single Euclidean embedding space realizes a positive semidefinite Gram matrix, we retain the
positive spectral modes of \(M^\star_S\). In the main experiments, we keep the top \(d=2048\)
positive eigenmodes, matching the dimension of the Gemma unembedding vectors. If \(U_d\)
contains the retained eigenvectors and \(\Lambda_d\) the corresponding positive eigenvalues, we
define
\[
    W = U_d\Lambda_d^{1/2}.
\]
The word2vec-style Gram matrix used in the figures is \(WW^\top\). The treatment of
negative eigenvalues of \(M^\star\), and the corresponding PSD component of the co-occurrence
matrix, is given in \cref{app:m_star_generalize}.

\subsection{Gemma unembeddings and LLM internal activations}
\label[app]{app:gemma_unembeddings}

We use the pretrained \texttt{google/gemma-2b} checkpoint of Gemma Team~\citep{gemmateam2024gemmaopenmodelsbased}, accessed through the Hugging Face \texttt{transformers} library. This checkpoint is a two-billion-parameter decoder-only language model trained on 3T tokens, with a vocabulary of \(256{,}128\) tokens and representation dimension \(2048\). We analyze its unembedding vectors and, in robustness checks, contextual residual-stream activations through \texttt{TransformerLens}~\citep{nanda2022transformerlens}.

For each eligible lemma, we identify its matched single token and extract the corresponding vector from the model's output embedding matrix. In implementation, this is the matrix returned by \texttt{model.get\_output\_embeddings().weight}. We then center and whiten the full vocabulary of unembedding vectors before restricting to WordNet tokens. Let \(v_i\) denote the unembedding vector for token \(i\). We compute the empirical full-vocabulary mean \(\mu\) and covariance \(\Sigma\), and apply
\[
    \tilde v_i = \Sigma^{-1/2}(v_i-\mu),
\]
where \(\Sigma^{-1/2}\) is computed by eigendecomposition of the empirical covariance. The whitening transform is fit once on the full Gemma vocabulary and then applied to the eligible WordNet tokens. Robustness checks using globally centered but unwhitened Gemma vectors are reported in \cref{app:robustness_alternative_embeddings}.

For any selected tree or synset set \(S\), the Gemma Gram matrix is
\[
    G^{\mathrm{Gemma}}_{ij} = \tilde v_i^\top \tilde v_j,
    \qquad i,j\in S.
\]

For the LLM internal-activation controls, we also extract residual-stream activations from Gemma and Llama using \texttt{TransformerLens}. For each WordNet synset, we convert the selected best lemma to a raw title-cased string with no prefix or suffix; for example, \texttt{animal.n.01}, \texttt{cat.n.01}, and a multiword lemma such as \texttt{sea\_turtle} are represented by the prompts ``Animal'', ``Cat'', and ``Sea Turtle'', respectively. We then extract the final-token activation from the post-block residual stream,
\[
    \mathrm{blocks}.\ell.\mathrm{hook\_resid\_post},
\]
where \(\ell=\lfloor n_{\mathrm{layers}}/2 \rfloor\) is the exact middle layer of the model. The extracted activation vectors are globally centered across the eligible synset vocabulary before computing Gram matrices and alignment scores. We do not tune over layers or prompts; the middle layer and raw lexical prompt are used as a simple, model-agnostic choice.

For the Gemma internal-activation control, the middle layer is \(\ell=9\). Its residual-stream dimension is \(2048\), and its vocabulary size is \(256{,}128\).

For the Llama internal-activation control, we use \texttt{meta-llama/Llama-3.2-1B}, a member of the Llama 3 family of autoregressive transformer language models~\citep{llamaherd}. In the \texttt{TransformerLens} configuration used here, this model has \(16\) transformer layers, residual-stream dimension \(2048\), and vocabulary size \(128{,}256\); hence the exact middle layer is \(\ell=8\). We apply the same extraction protocol as for Gemma: raw title-cased best-lemma prompts, final-token residual-stream activations, and global centering across the eligible synset vocabulary.

The purpose of these internal-activation controls is not to optimize a model-specific semantic layer, but to test whether the hierarchical alignment signal appears in a simple contextual representation of modern decoder-only language models.

\subsection{Computational resources}
\label[app]{app:compute_resources}

All experiments were run on an Apple M4 CPU machine with approximately \(20\)GB of available RAM and no discrete GPU. The most expensive preprocessing step was the construction of Wikipedia co-occurrence statistics, which took roughly \(48\) hours. The eigendecomposition step took approximately \(30\) minutes. All other figure-generation and alignment computations took at most \(30\) minutes each. Extracting the Gemma and Llama internal activations used in the robustness controls took approximately \(30\) minutes in total.

\subsection{Existing assets and licenses.}
\label[app]{app:licenses}
We use WordNet, the November 2023 English Wikipedia dump, pretrained Gemma/Llama model resources, TransformerLens, and standard scientific Python libraries. These assets are cited in the paper, and the accompanying anonymized code/supplement lists the versions, sources, and available license or terms-of-use information. We use these assets for non-commercial research analysis and do not redistribute pretrained model weights or the Wikipedia dump.

\subsection{Illustrative organism taxonomy}
\label[app]{app:organism_taxonomy}

The organism taxonomy in \cref{fig:figure1,fig:organism_eigvec} is used only for visualization. We fixed the top-level structure to match Park et al.~\cite{park2025geometrycategoricalhierarchicalconcepts}: \emph{organism} branches into \emph{plant} and \emph{animal}. Among valid descendant trees satisfying this upper structure, we selected a tree using a lexicographic heuristic that first avoids low-frequency lemmas. The quantitative results do not depend on this illustrative choice and instead use the uniformly sampled binary subtrees described in \cref{app:binary_subtree_sampling}.

\section{The normalized co-occurrence matrix \(M^\star\)}
\label[app]{app:m_star_generalize}

\subsection{Definition, relation to PMI, and connection to word embeddings}
\label[app]{app:mstar_pmi_background}

This appendix gives additional background on the normalized co-occurrence matrix \(M^\star\) used throughout the paper. The starting point is the classical connection between word embedding algorithms and co-occurrence matrix factorization. Levy and Goldberg~\citep{levy2014neural} showed that the skip-gram with negative sampling objective implicitly targets a shifted pointwise mutual information matrix: at optimum, word--context inner products approximate
\begin{equation}
    \operatorname{PMI}_{ij} - \log k,
\end{equation}
where \(k\) is the number of negative samples and
\begin{equation}
    \operatorname{PMI}_{ij}
    :=
    \log \frac{P_{ij}}{P_iP_j}.
\end{equation}
Thus, in an idealized low-rank symmetric matrix-factorization view, the leading spectral modes of the PMI matrix determine the geometry of the learned embeddings. This is the sense in which word embedding geometry can be studied through the eigenspaces of a co-occurrence-derived target matrix.

In the main text we use a closely related normalized co-occurrence matrix, \(M^\star\), which arises in the symmetric-embedding, quadratic-loss approximation to skip-gram analyzed by Karkada et al.~\citep{karkada2025closedform}. Let \(P_{ij}\) denote the probability that words \(i\) and \(j\) co-occur within the chosen context window, and let \(P_i\) and \(P_j\) denote the corresponding unigram probabilities. The matrix \(M^\star\) is defined by
\begin{equation}
\label{eq:mstar_app_def}
M^\star_{ij}
=
\frac{P_{ij}-P_iP_j}{\frac12(P_{ij}+P_iP_j)}
=
\frac{2(P_{ij}-P_iP_j)}{P_{ij}+P_iP_j}.
\end{equation}
Karkada et al. show that, under this quadratic symmetric formulation, training can be viewed as approximately solving the matrix factorization problem
\begin{equation}
\label{eq:mstar_factorization_app}
    \widehat W
    =
    \arg\min_W
    \left\|WW^\top - M^\star\right\|_F^2 .
\end{equation}
Consequently, when \(M^\star\) is positive semidefinite and the embedding dimension is large enough, the learned Gram matrix \(WW^\top\) recovers \(M^\star\); at finite dimension it recovers its leading positive spectral modes. Thus \(M^\star\), like PMI, is a co-occurrence-derived matrix target for word embedding algorithms, but it corresponds to a slightly different objective and has useful normalization properties.

To see its relation to PMI explicitly, write the co-occurrence ratio as
\begin{equation}
    R_{ij}:=\frac{P_{ij}}{P_iP_j}.
\end{equation}
Then
\begin{equation}
\label{eq:mstar_ratio_transform}
M^\star_{ij}
=
\frac{2(R_{ij}-1)}{R_{ij}+1}.
\end{equation}
Thus \(M^\star_{ij}\) is a monotone transform of the same pairwise co-occurrence signal measured by PMI. If \(x_{ij}:=\log R_{ij}=\operatorname{PMI}_{ij}\), then \(R_{ij}=e^{x_{ij}}\), so
\begin{equation}
\label{eq:mstar_tanh_pmi}
    M^\star_{ij}
    =
    2\frac{e^{x_{ij}}-1}{e^{x_{ij}}+1}
    =
    2\tanh\left(\frac{x_{ij}}{2}\right)
    =
    2\tanh\left(\frac{\operatorname{PMI}_{ij}}{2}\right).
\end{equation}
Consequently, when \(R_{ij}\) is close to \(1\), or equivalently when \(\operatorname{PMI}_{ij}\) is close to zero,
\begin{equation}
    M^\star_{ij}
    \approx
    \operatorname{PMI}_{ij}.
\end{equation}

The advantage of \(M^\star\) is that it preserves the ordering of pairwise co-occurrence strength while remaining bounded:
\begin{equation}
    -2 \le M^\star_{ij} \le 2.
\end{equation}
It is therefore a bounded, symmetric normalization of the PMI signal. In the experiments, we work with \(M^\star\) because it is the matrix target associated with the symmetric quadratic embedding objective in \cref{eq:mstar_factorization_app}, while retaining the same local interpretation as PMI as a normalized measure of excess co-occurrence.

\subsection{Positive spectral component and Euclidean embedding geometry}
\label[app]{app:mstar_psd_component}

The main text presents the cleanest version of the theory in the idealized case where \(M^\star\) is positive semidefinite. In that setting, a Euclidean embedding Gram matrix can be identified directly with \(M^\star\), and the hierarchy-aligned spectral theory applies to \(M^\star\) itself. Empirically, however, \(M^\star\) need not be positive semidefinite. This matters because a single Euclidean embedding space can only realize positive semidefinite Gram matrices.

Let
\begin{equation}
\label{eq:mstar_eigendecomp}
    M^\star = U\Lambda U^\top
\end{equation}
be the eigendecomposition of \(M^\star\). Define the positive and negative spectral components
\begin{equation}
\label{eq:m_plus_minus_def}
    M^+
    :=
    U\Lambda_+U^\top,
    \qquad
    M^-
    :=
    U\Lambda_-U^\top,
\end{equation}
where
\begin{equation}
\label{eq:lambda_plus_minus_def}
    (\Lambda_+)_{aa}:=\max(\Lambda_{aa},0),
    \qquad
    (\Lambda_-)_{aa}:=\max(-\Lambda_{aa},0).
\end{equation}
Then
\begin{equation}
\label{eq:mstar_psd_nsd_decomp}
    M^\star = M^+ - M^-,
\end{equation}
with both \(M^+\) and \(M^-\) positive semidefinite.

The positive component \(M^+\) is the part of \(M^\star\) that can be represented as a Euclidean Gram matrix. If all positive eigenmodes are retained, the corresponding Gram matrix is \(M^+\). If only the top \(r\) positive eigenmodes are retained, the realized rank-\(r\) Gram matrix is
\begin{equation}
\label{eq:mplus_rank_r}
    M^+_r
    =
    U_r\Lambda_r U_r^\top
    =
    W_r W_r^\top,
    \qquad
    W_r := U_r\Lambda_r^{1/2},
\end{equation}
where \(\Lambda_r\) contains the top \(r\) positive eigenvalues and \(U_r\) contains the corresponding eigenvectors. This is the finite-dimensional Euclidean geometry recovered by a single-space spectral embedding.

This motivates the following softened version of the distance-kernel assumption used in the main text.

\begin{assumption}[PSD hierarchical distance structure]
\label{ass:psd_distance_structure}
Let \(S\) be a selected set of WordNet nodes, and let \(\mathrm{dist}(i,j)\) denote their tree distance. The positive spectral component of the normalized co-occurrence matrix satisfies
\begin{equation}
\label{eq:m_plus_distance_assumption}
    M^+_{ij}
    =
    C^+(\mathrm{dist}(i,j)),
    \qquad i,j\in S,
\end{equation}
for some positive decreasing function \(C^+\).
\end{assumption}

This assumption is weaker than requiring \(M^\star\) itself to be both positive semidefinite and distance-dependent. The idealized assumption in the main text corresponds to the special case \(M^-=0\), so that \(M^\star=M^+\). More generally, \(M^\star\) may contain negative spectral components, but the single-space Euclidean embedding geometry is governed by \(M^+\). Therefore, the block-decomposition and hierarchy-aligned eigenspace arguments apply directly to \(M^+\) whenever \cref{ass:psd_distance_structure} holds.

Empirically, this softened assumption is supported by \cref{fig:m_plus_decay}. We restrict \(M^\star\) to the eligible lemma set, symmetrize the restricted matrix, compute its leading positive eigenmodes, and construct low-rank PSD truncations \(M^+_r\) for several ranks \(r\). For each rank, we estimate the mean entry of \(M^+_r\) as a function of WordNet distance by the same distance-bin sampling procedure used in \cref{fig:global_decay}. Across ranks, \(M^+_r\) decays monotonically with semantic distance, showing that the hierarchical distance signal is present in the Gram-realizable part of the co-occurrence matrix.

\begin{figure}[t]
    \centering
    \includegraphics[width=0.47\textwidth]{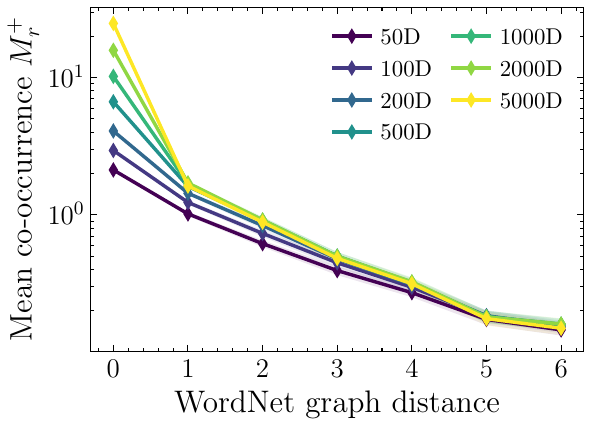}
    \hspace{0.03\textwidth}
    \includegraphics[width=0.47\textwidth]{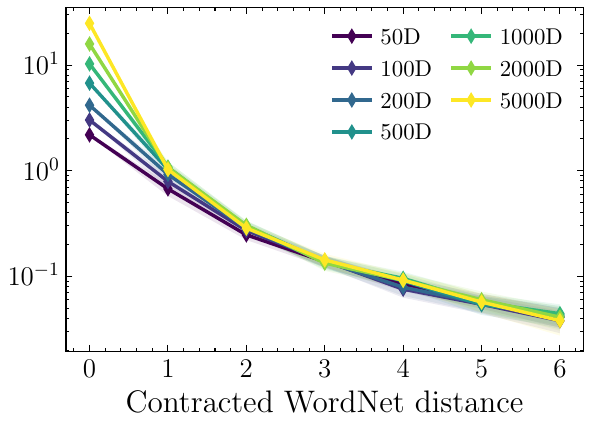}
    
    \caption{
    \textbf{Distance decay in low-rank PSD truncations of \(M^\star\).}
    We restrict \(M^\star\) to the eligible lemma set, retain the leading \(r\) positive eigenmodes, and form rank-\(r\) PSD Gram matrices \(M^+_r=U_r\Lambda_rU_r^\top\). Curves show the mean entry of \(M^+_r\) as a function of semantic distance in the original WordNet graph (left) and in the contracted arborescence used in our experiments (right). The decay persists across ranks and distance constructions, supporting the PSD hierarchical distance structure in \cref{ass:psd_distance_structure}.
    }
    \label{fig:m_plus_decay}
\end{figure}

In the main text, we fit the exponential distance kernel using the raw pairwise statistic \(M^\star\), because \(M^\star\) is computed directly from pairwise co-occurrence and unigram probabilities. By contrast, \(M^+\) is defined through a global spectral transformation of the full matrix. \Cref{fig:m_plus_decay} shows that this simplification does not drive the phenomenon: the same distance-decay structure appears in the PSD component that determines Euclidean embedding geometry.

\clearpage
\section{Additional robustness analyses}
\label[app]{app:further_evidence}

This appendix collects additional checks on the empirical assumptions and alignment results used in the main text. We begin with a diagnostic for the basic distance-kernel approximation \(M^\star_{ij}\approx f(D_{ij})\). A possible concern is that the apparent decay with induced tree distance \(D_{ij}\) could instead be explained by a coarser topological variable, such as the depth of the lowest common ancestor of \(i\) and \(j\). In that case, \(D_{ij}\) would not be the right one-dimensional summary of co-occurrence structure: pairs with small distance would appear to co-occur more often only because they tend to share deeper, more specific ancestors.

\Cref{fig:fixedDepthLCA} addresses this concern by conditioning on lowest-common-ancestor depth. Importantly, this diagnostic is computed over the same distribution of induced tree geometries used in the alignment experiments, rather than over uniformly random pairs from WordNet. For each sampled complete binary tree under a candidate root, we evaluate \(M^\star_{ij}\) for every unordered pair of nodes in the sampled tree, including diagonal pairs, and assign each pair its induced tree distance \(D_{ij}\) and induced \(\mathrm{depthLCA}_{ij}\). We then bin pairs jointly by \((D_{ij},\mathrm{depthLCA}_{ij})\), compute root-level bin means after aggregating over that root's sampled trees, and average these root-level means across roots. Within each fixed-\(\mathrm{depthLCA}\) stratum, the mean co-occurrence statistic still decreases with \(D_{ij}\). Thus the decay captured by \(f(D_{ij})\) is not merely a consequence of comparing pairs whose common ancestor lies at different depths.

Having checked this structural assumption, we next ask whether the reported eigenspace-alignment signal depends on other modeling or representation choices. The remaining robustness checks vary the null baseline, the parametric form of the distance kernel, the sampled tree depth, and the LLM representation used to form the empirical Gram matrix.

\begin{figure*}[t]
    \centering
    \includegraphics[width=0.85\textwidth]{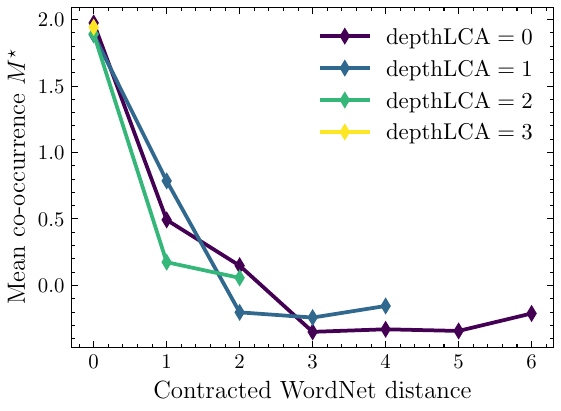}
    \caption{
    \textbf{Distance decay persists after conditioning on lowest-common-ancestor depth.}
    For each sampled complete binary tree, we evaluate \(M^\star_{ij}\) for every unordered node pair \((i,j)\), including diagonal pairs, and assign each pair its induced tree distance \(D_{ij}\) and lowest-common-ancestor depth \(\mathrm{depthLCA}_{ij}\). Pairs are then grouped by \((D_{ij}, \mathrm{depthLCA}_{ij})\), and bin means are computed after aggregating over sampled trees for each root and then averaged across roots. Colors indicate fixed \(\mathrm{depthLCA}\) strata. Within each stratum, the mean co-occurrence statistic decreases with \(D_{ij}\), showing that the observed decay is not explained solely by common-ancestor depth and supporting the use of a distance-based model \(f(D_{ij})\).
    }
    \label{fig:fixedDepthLCA}
\end{figure*}

\subsection{Additional baseline}

Before varying kernel form, tree depth, or representation construction, we first show that the main eigenspace-alignment result remains above a stricter within-tree shuffle control. This control is stricter than the global shuffled-label baseline because it preserves the exact empirical vectors appearing in each sampled tree and only destroys their assignment to tree positions. This comparison is shown in \cref{fig:figure4_within_tree_baseline}.

\begin{figure*}[t]
    \centering
    \includegraphics[width=1.\textwidth]{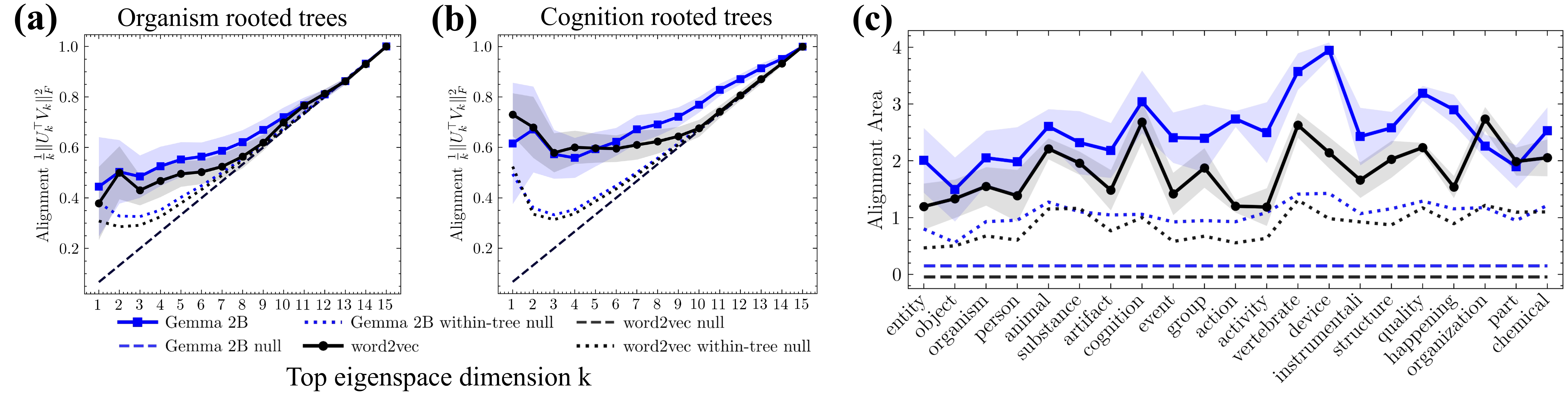}
    \caption{
    \textbf{Recreation of the main eigenspace-alignment experiment with an additional within-tree shuffle baseline.}
    We repeat the top-\(k\) eigenspace-alignment analysis from \cref{fig:micro_evidence} while displaying both null baselines. 
    Solid curves show the empirical alignment between the theoretical tree-kernel Gram matrix and the representation Gram matrix. 
    The global shuffled-label baseline randomly permutes the mapping from synsets to empirical vectors across the full eligible vocabulary, preserving the global vector distribution while destroying the semantic correspondence between WordNet nodes and representation vectors. 
    The within-tree shuffle baseline is computed separately for each sampled tree: it preserves the tree, the multiset of empirical vectors in that tree, and the empirical Gram spectrum, but randomly permutes those vectors across the tree positions before recomputing alignment to the same theoretical Gram matrix. 
    Thus, the within-tree baseline controls for unordered vector content within each sampled tree, whereas the global shuffle controls for vocabulary-level label correspondence. 
    The empirical alignment remains above both baselines, showing that the effect is not explained solely by the global distribution of vectors or by the unordered collection of vectors appearing in a sampled tree. 
    Shaded regions indicate one standard deviation across sampled trees.
    }
    \label{fig:figure4_within_tree_baseline}
\end{figure*}

\subsection{Kernel form}
We repeated the analysis using an alternative kernel form for the theoretical tree covariance. In addition to the exponential kernel used in the main analysis, we fit a shifted power-law kernel of the form
\[
    f(d) = \alpha (1+d)^{-\beta}.
\]
The resulting alignment curves remain above both the global and within-tree baselines, indicating that the qualitative effect is not specific to the exponential decay assumption. This control is shown in \cref{fig:kernel_robustness}.

\begin{figure*}[t]
    \centering
    \includegraphics[width=0.85\textwidth]{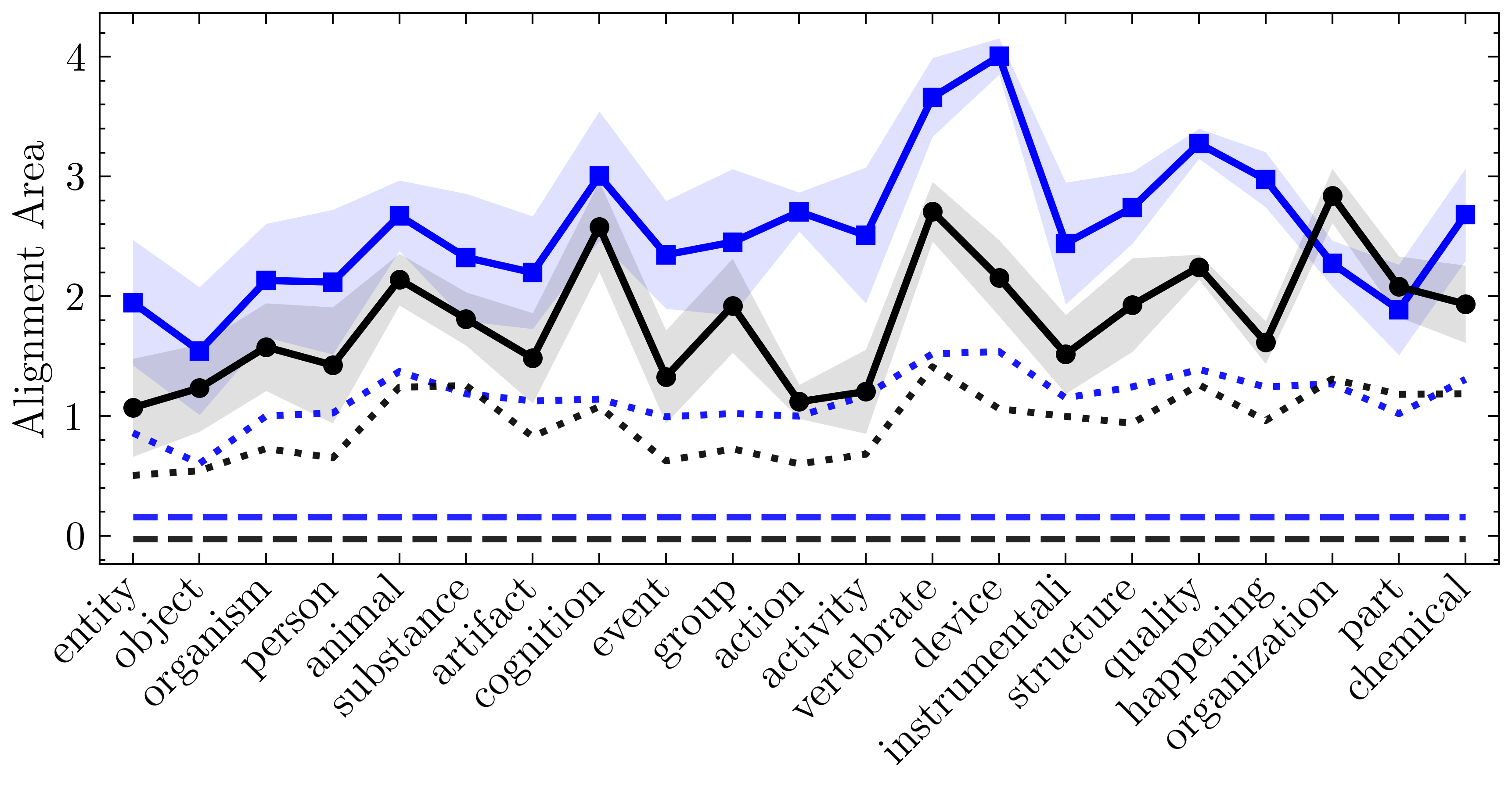}
    \includegraphics[width=0.85\textwidth]{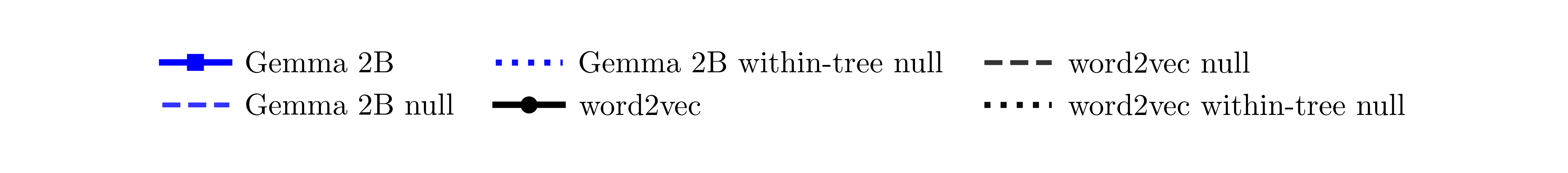}
    \caption{
    \textbf{Robustness to kernel parameterization.}
    We compare the alignment obtained using the exponential distance kernel from the main text with the alignment obtained using a shifted power-law kernel,
    \(f(d)=1.967\cdot (1+d)^{-2.153}\).
    The empirical alignment remains above both the global shuffle and within-tree shuffle baselines, indicating that the result is driven by distance-dependent hierarchical decay rather than by the specific exponential parameterization.
    }
    \label{fig:kernel_robustness}
\end{figure*}

\subsection{Smaller sampled trees}
\label[app]{app:l2_repeat}
Second, we repeated the root-sweep analysis with tree depth \(L=2\), corresponding to binary trees with \(2^{L+1}-1=7\) nodes rather than the \(15\)-node trees used when \(L=3\). This smaller tree size substantially increases the number of eligible roots, because fewer descendants are required to construct a complete sampled binary subtree. Despite this larger and less restrictive root set, the alignment area remains elevated above both baselines for many roots. Thus, the effect is not driven by a small subset of roots that happen to support larger complete trees. This control is shown in \cref{fig:l2_root_sweep}.

\begin{figure*}[t]
    \centering
    \includegraphics[width=0.85\textwidth]{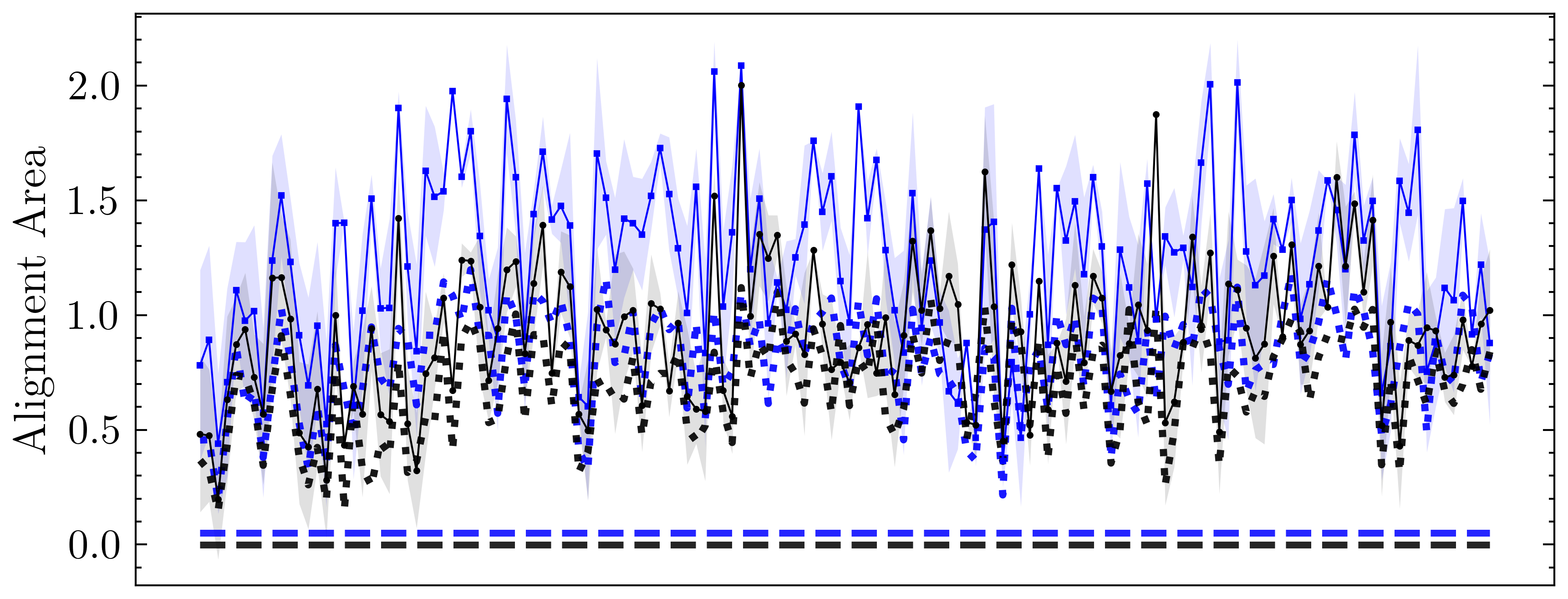}
    \includegraphics[width=0.85\textwidth]{imgs/appendix/area_legend.png}
    \caption{
    \textbf{Root-sweep robustness for \(L=2\).}
    We repeat the root-sweep analysis using complete binary trees of depth \(L=2\), corresponding to \(2^{L+1}-1=7\) nodes per sampled tree.
    This relaxation substantially increases the number of eligible roots relative to \(L=3\), from 21 to 144. The alignment area remains above the within-tree shuffle baseline for most roots.
    }
    \label{fig:l2_root_sweep}
\end{figure*}

\subsection{Alternative representations}
\label[app]{app:robustness_alternative_embeddings} 
Third, we evaluated alternative representations beyond the whitened Gemma embedding space used in the main analysis. Specifically, we repeated the alignment analysis using Gemma unembeddings, Gemma internal residual-stream activations, and Llama internal residual-stream activations. In each case, we subtracted the global mean vector from the representation before computing alignment. 

For the internal activation controls, we used the exact middle transformer layer in each model, chosen as \(\lfloor n_{\mathrm{layers}}/2 \rfloor\), and used the raw title-cased best lemma as the input string, e.g. ``Animal'', ``Cat'', or ``Sea Turtle''. This choice is intentionally simple and model-agnostic: it avoids tuning the layer index and avoids adding sentence-level context that could introduce additional uncontrolled structure.

All representation variants show alignment above the null baselines, although the alignment is much weaker for the internal representations. These representation controls are shown in \cref{fig:centered_internal_unwhitened_controls}.

\begin{figure*}[t]
    \centering
    \includegraphics[width=0.90\textwidth]{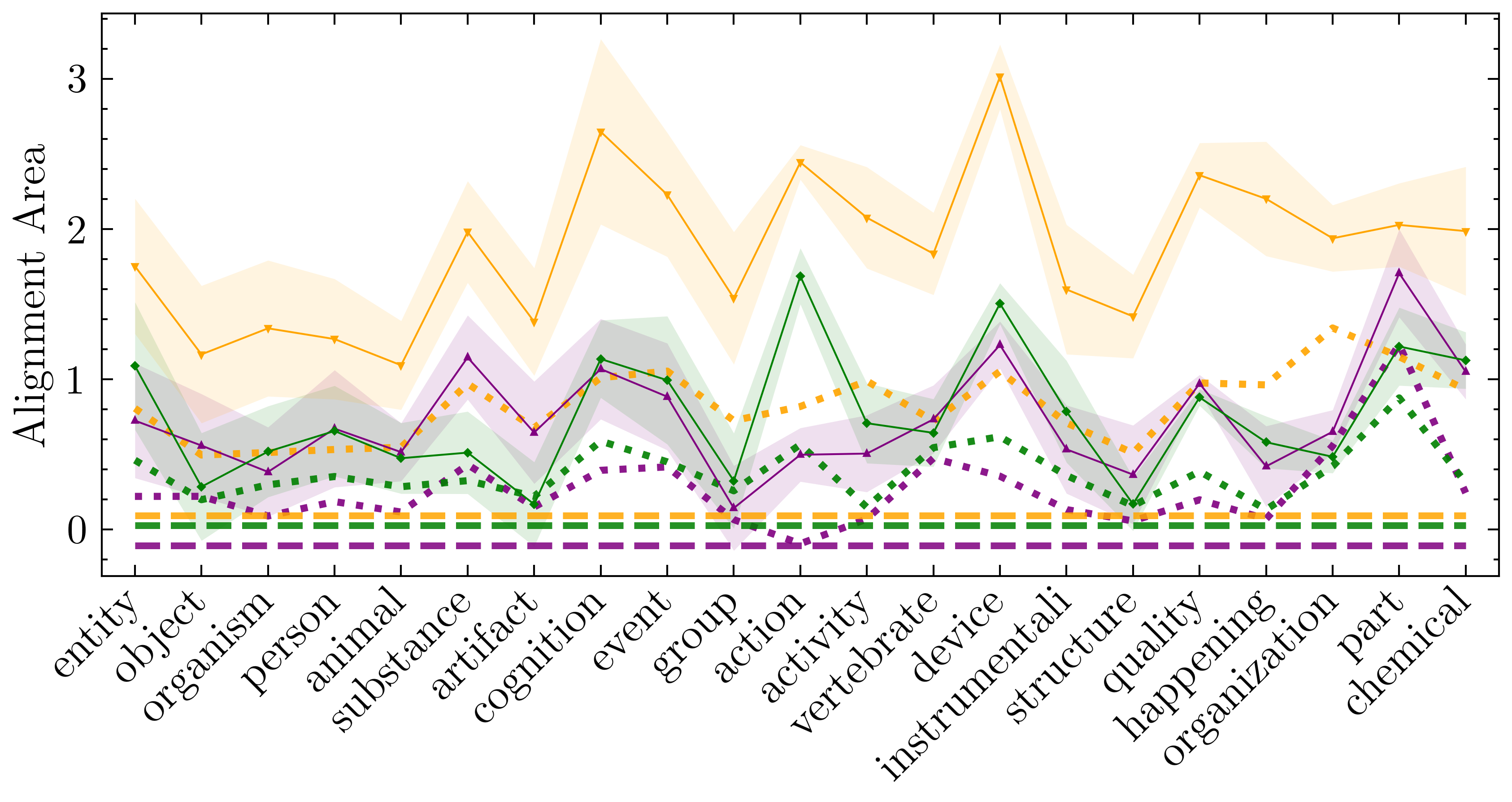}
    \includegraphics[width=0.90\textwidth]{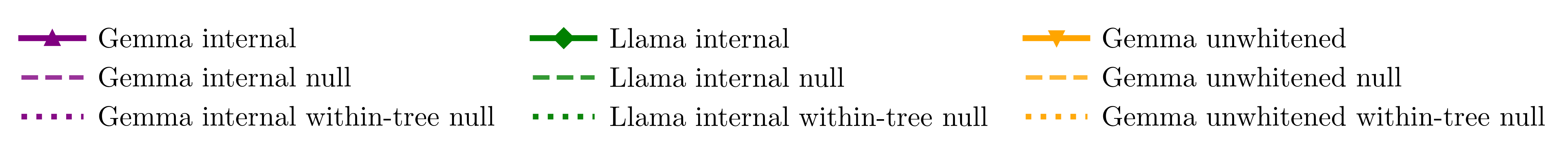}
    \caption{
    \textbf{Centered unwhitened and internal-activation representation controls.}
    We evaluate alignment using globally centered unwhitened Gemma embeddings, globally centered Gemma middle-layer residual-stream activations, and globally centered Llama middle-layer residual-stream activations.
    For each representation, the global mean vector is subtracted before computing the eigenspace alignment.
    Centering removes the constant anisotropic component that can otherwise increase the within-tree shuffle baseline.
    The centered representations remain substantially above both the global shuffle and within-tree shuffle baselines, showing that the hierarchical alignment is not explained by a shared global mode.
    }
    \label{fig:centered_internal_unwhitened_controls}
\end{figure*}

Together, these robustness checks support three conclusions. First, the alignment signal is not dependent on the specific decay family used to define the theoretical kernel. Second, the effect persists when the tree-completion constraint is relaxed from \(L=3\) to \(L=2\), yielding many more eligible roots. Third, the effect is visible not only in the main whitened embedding representation but also in centered embeddings and even in internal activations from both Gemma and Llama models. The baselines therefore rule out both vocabulary-level label randomization and within-tree vector-content effects as sufficient explanations for the observed alignment.

\clearpage
\section{Concept-vector diagnostics of hierarchical geometry}
\label[app]{app:park_consistency}

The main text shows that the fitted co-occurrence model reproduces the parent--child
orthogonal-innovation diagnostic of Park et al.~\cite{park2025geometrycategoricalhierarchicalconcepts}.
Here we give the details of the concept-vector estimator and report additional diagnostics.

Park et al. study hierarchical geometry at the level of \emph{concept vectors}. For each
WordNet synset \(w\), let \(Y(w)\) denote the set of tokens corresponding to \(w\) or one of
its descendants. The intended meaning of the concept vector \(\bar\ell_w\) is membership in
this descendant set. Their functional postulate is that concept directions should support
targeted interventions: changing from \emph{bird} to \emph{fish} should change which animal is
represented while preserving the broader \emph{animal} component.

For a hierarchy, this predicts an ancestor--descendant decomposition. If \(p\) is the parent of
\(w\), then the child concept vector should decompose as
\[
    \bar\ell_w \approx \bar\ell_p + \delta_w,
    \qquad
    \delta_w := \bar\ell_w-\bar\ell_p,
\]
where the innovation \(\delta_w\) is approximately orthogonal to the parent vector
\(\bar\ell_p\). The diagnostic plotted in \cref{fig:Parkconsistency} is therefore
\[
    \cos(\bar\ell_w-\bar\ell_p,\bar\ell_p),
\]
which should concentrate near zero for true parent--child pairs.

\paragraph{Concept-vector estimator.}

Let \(g(y)\in\mathbb R^d\) denote the representation vector associated with token \(y\).
In the LLM experiments, \(g(y)\) is the centered and whitened Gemma unembedding vector. In the
synthetic experiments, \(g(y)\) is the corresponding theoretical co-occurrence embedding
vector. For each concept \(w\), we estimate the concept vector from a training subset
\(Y_{\mathrm{tr}}(w)\subset Y(w)\), formed by independently selecting \(70\%\) of descendant
tokens for each synset, following Park et al.

Define
\[
    \mu_w
    :=
    \mathbb E[g(y)\mid y\in Y_{\mathrm{tr}}(w)],
    \qquad
    \Sigma_w
    :=
    \operatorname{Cov}(g(y)\mid y\in Y_{\mathrm{tr}}(w)).
\]
Operationally, the estimator chooses a direction whose projected target-set mean is large
relative to its projected target-set variance:
\[
    \max_{u\in\mathbb R^d}
    \frac{(u^\top \mu_w)^2}{u^\top \Sigma_w u}.
\]
The maximizing direction is
\[
    \tilde g_w
    =
    \frac{\Sigma_w^\dagger \mu_w}
    {\|\Sigma_w^\dagger \mu_w\|_2},
\]
where \(\Sigma_w^\dagger\) denotes the Moore--Penrose pseudoinverse. The corresponding
concept vector is
\[
    \bar\ell_w
    =
    (\tilde g_w^\top \mu_w)\tilde g_w.
\]
In the isotropic case \(\Sigma_w\propto I\), this reduces to \(\bar\ell_w=\mu_w\).

Since the representation dimension is larger than the number of descendant samples for many
concepts, the empirical covariance is poorly conditioned. We therefore estimate
\(\Sigma_w\) using the Ledoit--Wolf shrinkage estimator~\cite{ledoitwolf}
before applying the pseudoinverse. This covariance regularization is used only for estimating
the Park et al. concept vectors; it is distinct from the full-vocabulary whitening used for
Gemma unembeddings in \cref{app:gemma_unembeddings}.

\paragraph{Synthetic co-occurrence embeddings.}

To test whether the same diagnostic can arise without an LLM-specific hierarchical mechanism,
we repeat the concept-vector estimation procedure on theoretical co-occurrence embeddings. We
construct a Gram matrix from the fitted distance kernel and take its top \(d=2048\) positive
eigenmodes, as in \cref{eq:w2v_form}. Applying the same estimator to these embeddings yields
the same qualitative parent--child orthogonality pattern as Gemma unembeddings: true
parent--child innovations concentrate near zero, while shuffled-parent baselines are displaced.

\paragraph{Additional diagnostics.}

\Cref{fig:Parkconsistencyapp} reports additional diagnostics following Park et al. The top row
compares projection-based concept separation: points in a concept's held-out descendant set
project more strongly onto the estimated concept direction than randomly selected or shuffled
tokens. The bottom row evaluates orthogonal innovations along hypernym chains and compares
against standard null controls, including shuffled parent assignments and globally shuffled
embedding vectors. The synthetic co-occurrence embeddings reproduce the qualitative separation
between true hierarchy relations and null baselines, although with larger variance than Gemma.

This comparison changes the interpretation of the orthogonal-innovation pattern. Park et al.'s
framework postulates ideal concept vectors under an appropriate inner product, for which
hierarchy implies exact orthogonality; empirical deviations are then naturally treated as
noise or imperfect realization of the ideal. Our results show that the same concept-vector
construction yields approximate orthogonality in a purely co-occurrence-driven model. Thus,
the concept-level hierarchical geometry observed in LLM unembeddings need not by itself be
evidence for a hierarchy-specific functional mechanism; it can arise from the same spectral
structure that produces hierarchical splitting geometry.

\begin{figure*}[t]
    \centering
    \includegraphics[width=0.95\textwidth]{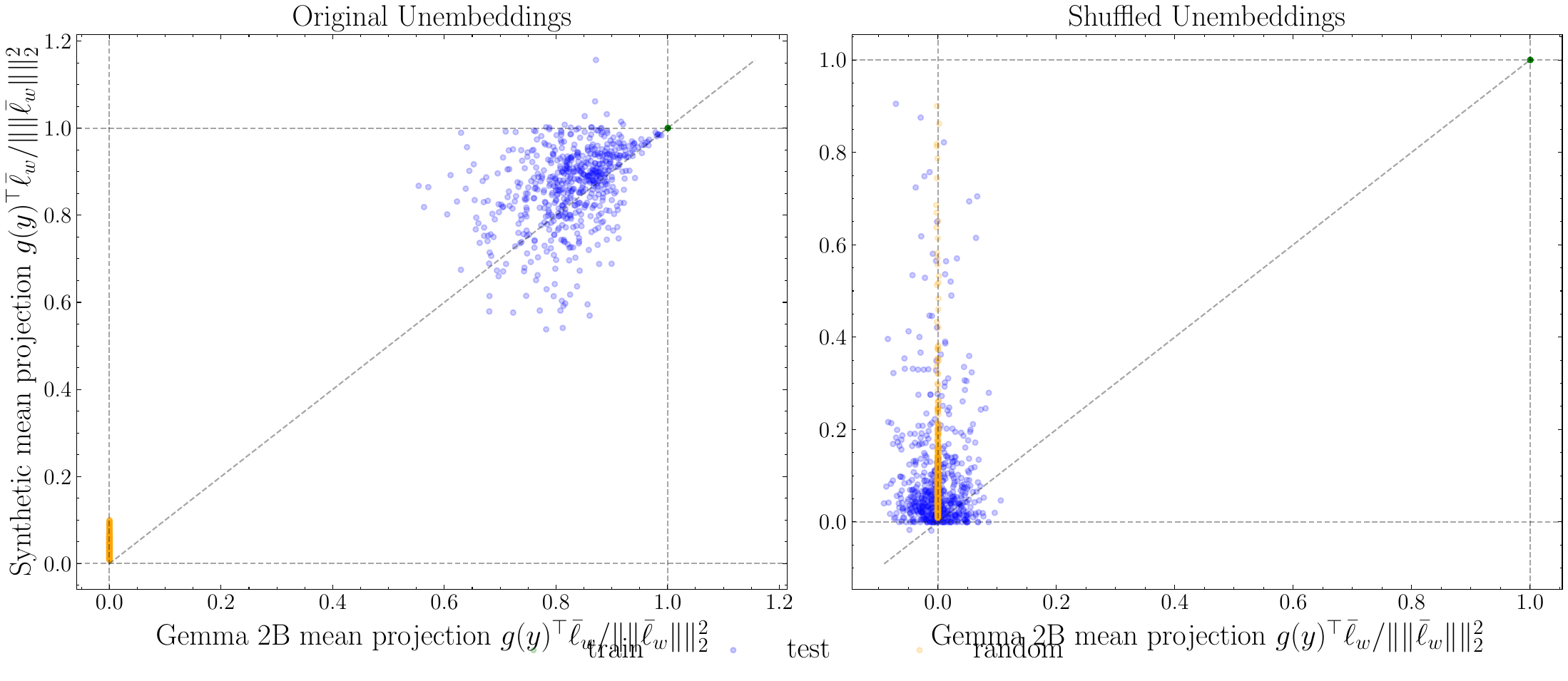}
    \includegraphics[width=0.95\textwidth]{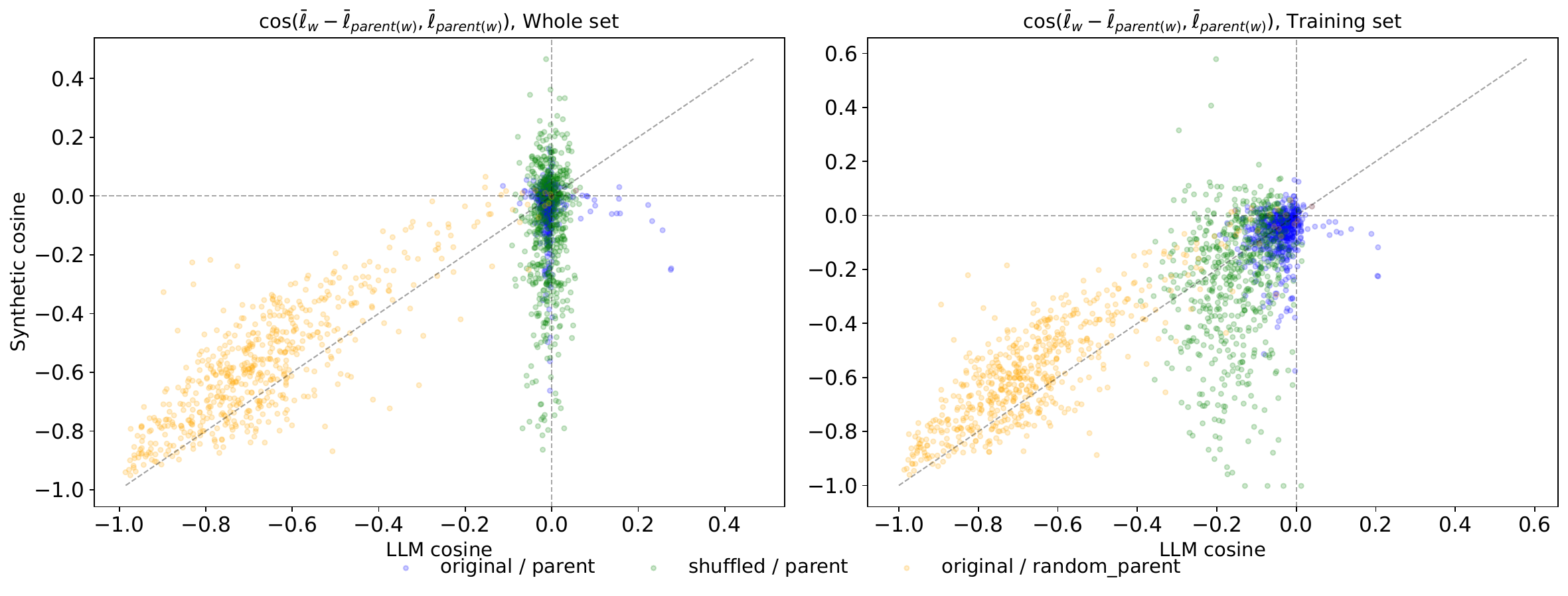}
    \caption{
    \textbf{Additional concept-vector diagnostics for Gemma and co-occurrence embeddings.}
    We apply the concept-vector estimator of Park et al.~\cite{park2025geometrycategoricalhierarchicalconcepts}
    to Gemma unembeddings and to theoretical co-occurrence embeddings constructed from the fitted
    distance kernel. Concept vectors are estimated from independently selected \(70\%\) training
    subsets of descendant tokens for each synset.
    \emph{Top row:} projection-based concept separation. Estimated concept directions assign larger
    projections to held-out descendant tokens than to random or shuffled-token baselines.
    \emph{Bottom row:} orthogonal-innovation diagnostics along hypernym chains. True hierarchy
    relations show near-orthogonal innovations, while shuffled-parent and globally shuffled-vector
    baselines are displaced. The co-occurrence model reproduces the qualitative structure of the
    Gemma diagnostics, with larger variance.
    }
    \label{fig:Parkconsistencyapp}
\end{figure*}

\end{document}